\definecolor{darkblue}{rgb}{0, 0, 0.5}
\pgfplotsset{compat=1.16}
\tikzstyle{vertex}=[draw,align=center]
\definecolor{red}{HTML}{ca0020}
\definecolor{lightred}{HTML}{f4a582}
\definecolor{lightblue}{HTML}{92c5de}
\definecolor{green}{HTML}{008837}
\definecolor{blue}{HTML}{2c7bb6}
\definecolor{tabblue}{HTML}{1f77b4}
\definecolor{taborange}{HTML}{ff7f0e}
\definecolor{tabgreen}{HTML}{2ca02c}
\definecolor{tabred}{HTML}{d62728}
\definecolor{tabpurple}{HTML}{9467bd}
\definecolor{tabbrown}{HTML}{8c564b}
\definecolor{tabpink}{HTML}{e377c2}
\definecolor{tabgray}{HTML}{7f7f7f}
\definecolor{tabolive}{HTML}{bcbd22}
\definecolor{tabcyan}{HTML}{17becf}
\definecolor{deepblue}{HTML}{0000ff}
\def\eqref#1{equation~\ref{#1}}
\def\1{\bm{1}}
\def\rvx{{\mathbf{x}}}
\def\rvy{{\mathbf{y}}}
\def\rmX{{\mathbf{X}}}
\def\rmY{{\mathbf{Y}}}
\DeclareMathAlphabet{\mathsfit}{\encodingdefault}{\sfdefault}{m}{sl}
\SetMathAlphabet{\mathsfit}{bold}{\encodingdefault}{\sfdefault}{bx}{n}
\newcommand{\KL}{D_{\mathrm{KL}}}
\DeclareMathOperator*{\argmax}{arg\,max}
\DeclareMathOperator*{\argmin}{arg\,min}
\newtheorem{theorem}{Theorem}[section]
\newtheorem{definition}{Definition}[section]
\newtheorem{lemma}{Lemma}[section]
\newcommand{\Reals}{\mathbb{R}}
\newcommand{\Oh}{\mathcal{O}}
\newcommand{\Exp}{\mathbb{E}}
\newcommand{\Prob}{\mathbb{P}}
\newcommand{\Normal}{\mathcal{N}}
\newcommand{\Ind}{\mathbf{1}}
\DeclarePairedDelimiterX{\infdivx}[2]{[}{]}{%
  #1\delimsize\| #2%
}
\DeclarePairedDelimiterX{\infdivxcolon}[2]{[}{]}{%
  #1\delimsize: #2%
}
\DeclarePairedDelimiterX{\infdivxcomma}[2]{[}{]}{%
  #1, #2%
}
\newcommand{\KLD}{\KL\infdivx}
\newcommand{\IPM}{\mathrm{IPM}_{\funClass}\infdivx}
\newcommand{\Ent}{\mathbb{H}}
\newcommand{\CrossEnt}{\Ent\infdivx}
\DeclarePairedDelimiter{\norm}{\lVert}{\rVert}
\DeclarePairedDelimiter{\abs}{\lvert}{\rvert}
\DeclarePairedDelimiterX{\innerProd}[2]{\langle}{\rangle}{%
    #1,#2%
}
\newcommand{\funClass}{\mathcal{F}}
\newcommand{\Loss}{\mathcal{L}}
\crefname{algocf}{alg.}{algs.}
\Crefname{algocf}{Algorithm}{Algorithms}
\newcommand{\XSpace}{\mathcal{X}}
\newcommand{\QSpace}{\mathcal{Q}}
\newcommand{\YSpace}{\mathcal{Y}}
\newcommand{\Process}{\mathcal{P}}
\newcommand{\boundedMeasures}{\mathbb{M}}
\newcommand{\boundedFunctions}{\mathfrak{B}}
\newcommand{\GP}{\mathcal{GP}}
\newcommand{\MMD}{\mathrm{MMD}}
\newcommand{\critics}{\mathcal{C}}
\newcommand{\BvN}{\mathcal{BN}}
\newcommand\extrafootertext[1]{%
    \bgroup
    \renewcommand\thefootnote{\fnsymbol{footnote}}%
    \renewcommand\thempfootnote{\fnsymbol{mpfootnote}}%
    \footnotetext[0]{#1}%
    \egroup
}
\newcommand{\LPP}{\mathrm{LPP}}
\renewcommand{\paragraph}[1]{\par\textbf{#1}}
\newcommand{\bleumetric}{\textsc{Bleu}\xspace}  
\newcommand{\comet}{\textsc{Comet}\xspace}
\pgfplotsset{
    lm/.style={
        only marks,
        mark=*,
        mark size=3},
    mt/.style={
        only marks,
        mark=square*,
        mark size=3},
    online/.style={
        only marks,
        mark=triangle*,
        mark size=4},
    reference/.style={
        only marks,
        mark=diamond*,
        mark size=4},
}
\def\mtVal{mt}
\def\lmVal{lm}
\def\refVal{reference}
\def\onlineVal{online}
\newcommand{\createDPPlot}[5]{



\foreach \i in {1,...,{#4}} {

    \pgfplotstablegetelem{\i}{#1}\of{#3}
    \let\x=\pgfplotsretval
    \pgfplotstablegetelem{\i}{#2}\of{#3}
    \let\y=\pgfplotsretval
    \pgfplotstablegetelem{\i}{model name}\of{#3}
    \let\modelname=\pgfplotsretval
    
    \pgfplotstablegetelem{\i}{model type}\of{#3}
    \let\modeltype=\pgfplotsretval

    \ifx\modeltype\mtVal
        \addplot+[mt] coordinates {(\x,\y)};
    \fi
    \ifx\modeltype\lmVal
        \addplot+[lm] coordinates {(\x,\y)};
    \fi
    \ifx\modeltype\refVal
        \addplot+[reference] coordinates {(\x,\y)};
    \fi
        \ifx\modeltype\onlineVal
        \addplot+[online] coordinates {(\x,\y)};
    \fi
    
    \ifx\relax#5\relax\else
        \addlegendentryexpanded{\modelname}
    \fi
}
}
\title{You Cannot Feed Two Birds with One Score:\\
the Accuracy-Naturalness Tradeoff in Translation}
\author{Gergely Flamich\thanks{Work done while a Student Researcher at Google as a doctoral student at the University of Cambridge.
Now at Imperial College London.
}\\
Imperial College London \\
\texttt{g.flamich@imperial.ac.uk}
\And
David Vilar
\qquad Jan-Thorsten Peter
\qquad Markus Freitag \\
Google\\
\texttt{\{vilar, jtp, freitag\}@google.com} \\
}
\begin{document}

\ifcolmsubmission
\linenumbers
\fi

\maketitle

\begin{abstract}
The goal of translation, be it by human or by machine, is, given some text in a source language, to produce text in a target language that simultaneously 1)~preserves the meaning of the source text and 2)~achieves natural expression in the target language.
However, researchers in the machine translation community usually assess translations using a single score intended to capture semantic accuracy and the naturalness of the output simultaneously.
In this paper, we build on recent advances in information theory to mathematically prove and empirically demonstrate that such single-score summaries \textit{do not and cannot} give the complete picture of a system's true performance.
Concretely, we prove that a tradeoff exists between accuracy and naturalness and demonstrate it by evaluating the submissions to the WMT24 shared task.
Our findings help explain well-known empirical phenomena, such as the observation that optimizing translation systems for a specific accuracy metric (like \bleumetric) initially improves the system's naturalness, while ``overfitting'' the system to the metric can significantly degrade its naturalness.
Thus, we advocate for a change in how translations are evaluated: rather than comparing systems using a single number, they should be compared on an \textit{accuracy-naturalness plane}.
\end{abstract}
\extrafootertext{\textbf{Note:} the paper title is a wordplay on ``You cannot feed two birds with one scone,'' a more graceful and less lethal variant of a popular English saying.}
\section{Introduction}
\label{sec:introduction}
Machine translation (MT) is one of the undeniable success stories of computer science and machine learning research.
Its history is long and varied, dating back to at least the 1960s
and having progressed through symbolic, statistical and, most recently, neural approaches.
By now, the performance of state-of-the-art systems is close to or matches that of human translators.
But how do we assess the quality of translation systems?
Indeed, how \textit{should} we assess them?
Already in the early days of MT, researchers recognized the need to evaluate translations along two axes:
the ALPAC project considered the ``intelligibility'' and ``fidelity'' of the translations \citep{nap1966language}, while DARPA evaluations used ``adequacy'' and ``fluency'' \citep{white-oconnell-1993-evaluation}.
Although these concepts are intuitively appealing, they partly fell out of favour for two main reasons.
First, they lack solid, widely-accepted formal definitions, which made it \begin{inparaenum}[1)] \item difficult to define guidelines for human evaluations, and \item impossible to establish their theoretical properties and their interaction. \end{inparaenum}
Second, in recent years, optimising solely some notion of accuracy where we compare an MT hypothesis with a given reference, has already lead to fluent translations in practice.
Hence, even the relevance of such a distinction has become unclear.
Nonetheless, \replaced[]{using a single metric for automatic translation evaluation}{this approach to automatic evaluation of machine translation} has always had issues.
For example, for a few years now, it has been clear that classical ``overlap metrics,'' such as \bleumetric and chrF are no longer good enough to assess translation quality, as they do not correlate well with human ratings, which prompted researchers to move towards neural metrics instead \citep{freitag2022results}.
However, in the latest WMT general task, a similar phenomenon occurred: systems with the best automatic scores (based on neural metrics) did not achieve the best score among human raters \citep{kocmi2024findings}.
This and related phenomena motivated us to reexamine translation evaluation practices.
\paragraph{The accuracy-naturalness tradeoff.}
As we note above, at present there is an implicitly held belief (or wishful thought) in the community that there should exist some metric, which can be optimized on to yield a translation system that is simultaneously accurate and natural-sounding.
To examine the plausibility of this belief, we propose formal definitions of accuracy and naturalness by extending the mature, and widely-accepted information-theoretic framework of \citet{blau2018perception}.
Then, we show that under these definitions, the above goal is hopeless: there is \textbf{no metric}, optimizing which will yield a system that is simultaneously accurate and perfectly natural-sounding.
\added{Specifically,} our contributions are:
\begin{itemize}
\item We extend \citet{blau2018perception}'s distortion-perception theory to translation, and mathematically prove and empirically demonstrate that there is a tradeoff between the accuracy of translations and their naturalness.
\item We introduce target-only naturalness assessment, and establish a theoretical link between averaging these scores and approximating statistical distances.
\item We approximate accuracy-naturalness curves using LLM-generated samples.
\item
Using our theory, we explain the above-mentioned phenomena when optimizing for accuracy metrics: while accuracy and naturalness correlate far from their Pareto-frontier, they begin to anti-correlate close to the frontier.
\end{itemize}
\section{Background and Notation}
\label{sec:background}
\paragraph{Assessing translation accuracy.}
The primary measure of a translator's \added{ability/}quality is their \textit{accuracy}, that is, how well their translations capture the meaning of the source text.
Therefore, quantifying accuracy necessarily involves a comparison between the translator's output and a \textit{reference}.
In practice, the reference could be the source text, some translation produced by a professional translator that is taken to be ground truth, or both.
In this paper, we refer to any metric that performs such a comparison as a \textit{reference metric}.
Classic examples include \bleumetric \citep{papineni-etal-2002-bleu} and chrF \citep{popovic-2015-chrf}, which compare candidates with reference translations.
They are convenient and simple, because they assess translations on a \textit{lexical level} with the hope that proximity to the reference translation will ensure that the translations are correct at the \textit{semantic level} as well.
In recent years, the community has moved toward neural reference metrics, such as MetricX \citep{juraska2024metricx} and \comet \citep{rei2020comet}, which aim to assess translations directly at the \textit{semantic level} instead.
These metrics typically aim to directly predict human opinion scores from large language model~(LLM) features.
They still rely on reference translations, but are more robust and hence better suited for evaluating modern MT systems \citep{freitag2022results}.
As a further development, quality estimation (QE) metrics drop the need for a reference translation\added{, but as they take the source text as input, we consider them reference metrics too}.
\par
In our paper, we identify \emph{accuracy} with \emph{reference metrics} and use the terms reference metric and accuracy metric interchangeably.
Arguably, neural metrics measure a combination of adequacy and fluency, but in our framework they are nonetheless accuracy metrics because they still rely on a reference translation (or the source sentence in case of QE metrics).
\paragraph{Assessing translation naturalness.}
Producing fluent translations has always been an objective of translation systems.
In the case of MT, early systems struggled to even produce grammatically correct sentences.
As the quality of automatic systems improved, this became less of an issue, but the community encountered another problem, long known in the (human) translation community:
translated text often does not sound completely natural, it contains \textit{translationese} \citep{gellerstam1986translationese}.
To study and eliminate this phenomenon, researchers have proposed several candidate metrics for naturalness, such as the language model marginal (log-)probability \citep{freitag-etal-2022-natural, lim2024simpsons} as well as linguistically motivated metrics as proposed by \citet{vanmassenhove2021machine}.
These scores are also called \textit{no-reference metrics} as they do not rely on additional data.
Note that no-reference metrics are not to be confused with QE metrics, which we consider to be reference metrics as we explained above.
However, although they are intuitive, no-reference metrics lack a more formal mathematical justification.
In this paper, we develop an abstract notion of naturalness based on information theory, and show how no-reference metrics conform to our definitions, which provides a unifying framework.
\paragraph{Terminology and notation.}
We denote \replaced{some}{a} randomly selected source text by the random variable $\rvx \in \XSpace$ and \replaced{some}{a} random target text by $\rvy \in \YSpace$, with $\XSpace$ ($\YSpace$) the set of all possible source (target) texts.
We denote probability distributions by the capital roman letters $P, Q$ and $R$, and denote the expectation of a function $f$ with respect to distribution $P$ as $\Exp_{\rvx \sim P}[f(\rvx)]$.
Source-target pairs are drawn from the joint distribution $P_{\rvx, \rvy}$.
This joint distribution also immediately defines the marginal over the source sentences $P_\rvx$ and marginal over the target sentences $P_\rvy$.
For the purposes of this paper, we think of $P_{\rvx}$ as a ``true'' monolingual distribution over the source language and $P_{\rvy \mid \rvx}$ as the distribution of human translations into the target language given the source sentence $\rvx$.
\added{This philosophy reflects current practices in machine translation: the source language is typically derived from a monolingual corpus, which professional translators then translate into the target language.
Indeed, there has been a recent explicit effort in the MT community to avoid back-translations or translations in both directions \citep{toral2018attaining,laubli2020set}.}
\par
Since we obtain $P_\rvy$ by marginalising over the input, it is the distribution over ``human translations into the target language.''
Thus, it will be useful at times to consider a different distribution $R_\rvy$, which we can think of as a ``true'' monolingual reference over the target language.
Finally, we model a translation system as a conditional distribution $Q_{\rvy \mid \rvx}$.
Then, we can also immediately consider the ``translator's marginal'' $Q_\rvy(\cdot) = \Exp_{\rvx \sim P_\rvx}[Q_{\rvy \mid \rvx}(\cdot)]$.
\section{The Accuracy-Naturalness Tradeoff in Translation}
\label{sec:distortion_perception_tradeoff}

One of the key contributions of this paper is extending and generalising the work of \citet{blau2018perception}, and applying it to translation.
This section sets the scenery and then extends their theoretical results, so that they cover the scenarios we encounter in machine translation today.
The key theoretical difference between their work and ours is that they develop their theory for data reconstruction.
Our setting is fundamentally different: translation is not a reconstruction task, the input and output spaces are different.
\paragraph{Reference metrics (accuracy).}
We start by considering a direct transfer of the \textit{distortion metrics} defined in \cite{blau2018perception}.
These are functions ${\Delta: \YSpace \times \YSpace \to \Reals}$ that take as input a reference translation $y^{r}$ and a candidate translation $y^{c}$ and output a real number that measures how close the candidate translation is to the reference translation.
We require that: 1) $\Delta$ be bounded from below and 2) $\Delta$ is minimised when $y^{c}$ equals $y^r$.
Lexical metrics like (negative) \bleumetric and (negative) ChrF clearly fulfill these conditions, but we need to expand this definition to accommodate modern MT evaluation metrics.
First we note that some neural metrics (including QE metrics) take the source sentence as an additional ``reference''.
We thus expand our functional form to ${\Delta: \XSpace \times \YSpace \times \YSpace \to \Reals}$.
The interpretation of the output value should also be generalized to the degree in which the candidate $y^c$ captures the \emph{meaning} with respect to $x$ and $y^r$, and thus it is minimized when $y^c$ is semantically equivalent to the given reference.
As such, a reference metric corresponds to a notion of \textit{inaccuracy}.
\replaced{Current neural metrics are known or at least assumed to violate this requirement.
For example, they might be vulnerable to ``universal translations,'' which yield low distortion regardless of the reference \citep{yan2023bleurt}.
However, as we will see, our theory only requires the requirement to hold ``on average,'' which is indeed the case for neural metrics.}{
Note that condition 2) is difficult to prove for neural metrics, but it certainly is in the spirit of their design.}
In most cases, the reference metrics \added{the community uses} are ``accuracy-like'' (higher is better), which can be considered just as a negated distortion, and thus we define the accuracy (or reference-score) of a translation system $Q_{\rvy \mid \rvx}$ as:
\begin{linenomath}
\begin{align}
\label{eq:distortion_def}
A(Q_{\rvy \mid \rvx}) = -\Exp_{\rvx, \rvy^r \sim P_{\rvx, \rvy}}[\Exp_{\rvy^c \sim Q_{\rvy \mid \rvx}}[\Delta(\rvx, \rvy^r, \rvy^c)]]
\end{align}
\end{linenomath}
Virtually all commonly used translation metrics are designed to be reference metrics, they differ mainly on what arguments they utilize. For example, \bleumetric and ChrF ignore the source, QE metrics ignore the target reference, and MetricX
\footnote{In their latest version.}%
or \comet use all three.
\added{We note that in the community there is an unwritten feeling that neural metrics place a heavier weight on how well a particular translated sentence sounds than on the accuracy of the translation.
However, so long as there is an element of comparison with a reference, and assuming that our assumptions are satisfied, neural metrics are still reference metrics.
}
%
\paragraph{No-reference metrics (naturalness).}
Following \citet{blau2018perception}, we argue that the naturalness of a translated text should be intrinsic to it, and should not depend on the source text.
The basis of our thesis is the following desideratum: a set of translations \replaced{ought to be}{are} \textbf{perfectly natural} if no observer/critic can reliably distinguish them from a monolingual corpus in the target language.
More precisely, let $Q_{\rvy \mid \rvx}$ denote a (possibly and usually stochastic) translator.
Then, we consider the ``translator's marginal'' $Q_\rvy(\cdot) = \Exp_{\rvx \sim P_\rvx}[Q_{\rvy \mid \rvx}(\cdot)]$, which is the output distribution we obtain when we randomly select an input $\rvx \sim P_\rvx$ and translate it: $\rvy \sim Q_{\rvy \mid \rvx}$.
Then, we say that \textit{a translation system $Q_{\rvy \mid \rvx}$ is perfectly natural with respect to a monolingual reference distribution $R_\rvy$ in the target language, if the system's marginal matches the reference: $Q_\rvy = R_\rvy$}.
Crucially, it is up to the practitioner to define $R_\rvy$.
Of course, one could set $R_\rvy = P_\rvy$ to the marginal defined by our parallel dataset; however, this is not necessary and indeed there are many good reasons not to!
For example, the reference translations might contain ``human translationese'' \citep{gellerstam1986translationese} and it is usually better to assess the quality of the system against text that was conceived in the target language.
Furthermore, translation systems for different domains all require different notions of ``naturalness'' due, e.g., to the specific terminology that they use.
We might also wish to select $R_\rvy$ such that its support is filtered for unwanted content, such as slurs/curse words.
\par
In practice, requiring perfectly natural translations is too stringent and hence we relax it such that $Q_\rvy \approx R_\rvy$ instead, which we formalise using statistical distances.
Concretely, we pick some \textit{divergence} $D(Q, R)$, that is, $D(Q, R) \geq 0$ for any two distributions $Q, R$ and $D(Q, R) = 0$ if and only if $Q = R$.
However, similarly to distortion, a statistical distance measures the \textit{unnaturalness} or \textit{disfluency} of a system $Q_{\rvy \mid \rvx}$.
Hence, we define the naturalness with respect to a statistical distance $D$ and reference distribution $R_\rvy$ as the negated distance%
\begin{linenomath}%
\begin{equation}
N(Q_{\rvy \mid \rvx}) = -D(Q_{\rvy}, R_{\rvy}).
\end{equation}
\end{linenomath}
What distance $D$ should we choose?
Natural choices of $D$ include integral probability metrics (IPM) such as the total variation and Wasserstein distances, and $f$-divergences, such as the Kullback-Leibler or R{\'e}nyi divergences, as these families all have natural connections to distinguishability
\citep{sriperumbudur2009integral}; see \Cref{sec:dist_divergences_review} for a discussion.
\par
\added{Finally, we emphasize that a key feature of our theory is to recognize naturalness as a ``corpus-/dataset-level'' property.
As such, the theory has no notion of ``sentence-level'' naturalness.
In other words, our theory concerns itself with the naturalness of a translation system $Q_{\rvy \mid \rvx}$, but not with the naturalness of individual text segments.
}
%
%
\subsection{The Tradeoff}
Having defined accuracy and naturalness, we now study their interaction.
First, observe that the two objectives are distinct: we can construct a system that is perfectly natural, but highly inaccurate by setting $Q_{\rvy \mid \rvx} \gets R_\rvy$.
Such a ``translation system'' ignores its input and outputs some randomly chosen text from the reference distribution $R_\rvy$.
It would be a system with perfectly natural outputs, but utterly useless.
\par
However, if we have a system that is optimally accurate, can we ever expect it to be perfectly natural?
To answer this, consider neural metrics, such as MetricX or \comet, which are \textit{trained} so that they implicitly jointly assess adequacy and fluency (in the MT sense).
Concretely, let $\Delta^*$ be a neural metric we obtain by optimizing some appropriate objective.
Now, consider a translation system $Q_{\rvy \mid \rvx}^*$ that minimizes $\Delta^*$, i.e., ${\Delta(Q_{\rvy \mid \rvx}^*) = \min_{Q_{\rvy \mid \rvx}} \Delta^*(Q_{\rvy \mid \rvx})}$.
By definition, $Q_{\rvy \mid \rvx}^*$ is optimally accurate, but when is it also perfectly natural?
Letting ${Q_\rvy^*(\cdot) = \Exp_{\rvx \sim P_\rvx}[Q_{\rvy \mid \rvx}^*(\cdot)}]$, we see that by our definition, $Q_{\rvy \mid \rvx}^*(\cdot)$ is perfectly natural when $D(Q_\rvy^*, R_\rvy) = 0$, which occurs if and only if $Q_\rvy^* = R_\rvy$!
Due to the flexibility of our framework, this allows us to \textit{construct} a perfectly natural system from a perfectly accurate one: we just need to set the monolingual reference distribution $R_\rvy \gets Q_\rvy^*$!
However, this choice makes $R_\rvy$, and hence our notion of naturalness depend on $P_{\rvx, \rvy}$, which we argue is problematic for assessing translation naturalness.
Our philosophy is that the notion of naturalness in a given language $\YSpace$ should not depend implicitly or explicitly on another language $\XSpace$.
Otherwise, we arrive at the absurd conclusion that the naturalness in language $\YSpace$ is different when translating from language $\XSpace_1$ into $\YSpace$ and when translating from language $\XSpace_2$ into $\YSpace$!
However, $Q_\rvy^*$ breaks our principle in two ways: 1) it depends on $Q_{\rvy \mid \rvx}^*$, which in turn depends on $P_{\rvx, \rvy}$ and 2) it depends the marginalisation.
On the other hand, setting the reference distribution $R_\rvy$ to anything other than $Q_\rvy^*$ derived from a minimizer of $\Delta^*$ means that $Q_{\rvy \mid \rvx}^*$ will not be perfectly natural.
This argument illuminates that if we follow the philosophy that the monolingual reference $R_\rvy$ should not depend on $P_{\rvx, \rvy}$, then the degenerate case of perfect accuracy implying perfect naturalness is exceedingly unlikely in practice.
However, we emphasize that this conclusion relies on explictly disallowing $R_\rvy$ to depend on $P_{\rvx, \rvy}$, the motivation for which comes from the theory's application to translation.
In some other scenarios allowing such dependence might be sensible: for example, in \Cref{sec:no_two_birds_thm_proof}, we generalise Theorem 1 of \citet{blau2018perception}, which does allow $P_\rvy$-dependence and admits a similar ``no-two-birds-with-one-scone'' result.
\par
The above argument guarantees that there is almost always a tradeoff between accuracy and naturalness, but gives no estimate of its severity.
Indeed, we might expect the tradeoff to be mild for language pairs with an almost one-to-one correspondence, such as Spanish-Catalan or Serbian-Croatian.
On the other hand, the tradeoff should be more pronounced for more distant language pairs such as German-Japanese or Hungarian-Swahili.
We quantify these tradeoffs in \Cref{sec:experiments}, specifically in \Cref{fig:mqm_accuracy_vs_fluency}.
Hence, to study the tradeoff, we define the \textit{accuracy-naturalness} (AN) function%
\footnote{The accuracy-naturalness function is related to the better-known distortion-perception function ${\delta(P) = \min_{Q_{\rvy \mid \rvx}} \{\Delta(Q_{\rvy \mid \rvx})\} \text{ s.t. } D(Q_\rvy, R_\rvy) \leq P}$ \citep{blau2018perception} via $\delta(P) = -A(-P)$.}
\begin{linenomath}
\begin{equation}
\label{eq:accuracy_naturalness_function}
A(N) = \max_{Q_{\rvy \mid \rvx}} \{-\Delta(Q_{\rvy \mid \rvx})\} \quad \text{subject to } -D(Q_\rvy, R_\rvy) \geq N.
\end{equation}
\end{linenomath}
$A(N)$ represents the best accuracy we can ever expect for a translation system, subject to achieving a naturalness of at least~$N$.
We have the following result, analogous to Theorem 2 of \citet{blau2018perception}; see \Cref{sec:accuracy_naturalness_curve_properties_proof} for the proof.%
\begin{restatable}{thm}{accuracyNaturalnessProperties}%
\label{thm:accuracy_naturalness_curve_properties}
Let $A(N)$ be the \replaced{accuracy-naturalness}{distortion-perception} function defined by distortion $\Delta$, distributional distance $D$, parallel distribution $P_{\rvx, \rvy}$ and monolingual reference $R_\rvy$.
Then:
\begin{enumerate}
\item $A(N)$ is \textbf{non-increasing} in $N$.
\item if $D$ is convex in its first slot, then $A(N)$ is \textbf{concave} in $N$.
\end{enumerate}
\end{restatable}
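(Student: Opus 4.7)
The plan is to follow the template of Theorem 2 of \citet{blau2018perception}, exploiting two structural properties of the translation setting. First, $Q_{\rvy \mid \rvx} \mapsto \Delta(Q_{\rvy \mid \rvx})$ is \emph{affine} in $Q_{\rvy \mid \rvx}$, because \Cref{eq:distortion_def} is an iterated expectation of a fixed function $\Delta(\rvx, \rvy^r, \rvy^c)$. Second, the marginalisation map $Q_{\rvy \mid \rvx} \mapsto Q_\rvy = \Exp_{\rvx \sim P_\rvx}[Q_{\rvy \mid \rvx}(\cdot)]$ is also affine. These two facts let me take convex combinations of candidate translators and transport them through both the objective and the constraint of \Cref{eq:accuracy_naturalness_function} in a predictable way.

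For part 1 (monotonicity), the argument is purely set-theoretic. If $N_1 \leq N_2$, then the feasible set $\{Q_{\rvy \mid \rvx} : -D(Q_\rvy, R_\rvy) \geq N_2\}$ is contained in $\{Q_{\rvy \mid \rvx} : -D(Q_\rvy, R_\rvy) \geq N_1\}$. Maximising the same objective $-\Delta(Q_{\rvy \mid \rvx})$ over a larger feasible set can only increase (or preserve) the optimal value, so $A(N_1) \geq A(N_2)$.

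For part 2 (concavity), I would fix $N_1, N_2$ in the effective domain of $A$, a scalar $\lambda \in [0,1]$, and let $N_\lambda = \lambda N_1 + (1-\lambda) N_2$. Choose (possibly $\epsilon$-approximate) maximisers $Q^{(1)}_{\rvy \mid \rvx}$ and $Q^{(2)}_{\rvy \mid \rvx}$ attaining $A(N_1)$ and $A(N_2)$, and form the mixture $Q^{(\lambda)}_{\rvy \mid \rvx} = \lambda Q^{(1)}_{\rvy \mid \rvx} + (1-\lambda) Q^{(2)}_{\rvy \mid \rvx}$. Since marginalisation is affine, its marginal is $Q^{(\lambda)}_\rvy = \lambda Q^{(1)}_\rvy + (1-\lambda) Q^{(2)}_\rvy$. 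Convexity of $D$ in its first slot then gives
\begin{equation*}
D(Q^{(\lambda)}_\rvy, R_\rvy) \;\leq\; \lambda D(Q^{(1)}_\rvy, R_\rvy) + (1-\lambda) D(Q^{(2)}_\rvy, R_\rvy) \;\leq\; -N_\lambda,
\end{equation*}
so $Q^{(\lambda)}_{\rvy \mid \rvx}$ is feasible at naturalness level $N_\lambda$. Combining this with the affineness of $-\Delta$ yields $A(N_\lambda) \geq -\Delta(Q^{(\lambda)}_{\rvy \mid \rvx}) = \lambda A(N_1) + (1-\lambda) A(N_2)$, which is exactly concavity. Sending $\epsilon \to 0$ at the end handles the case where the maximum is not attained.

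The main obstacle is not a deep analytical one but rather bookkeeping: I need to verify that the space of conditional distributions $Q_{\rvy \mid \rvx}$ is genuinely convex (so the mixture is a valid translator), that the effective domain of $A$ is an interval so that $N_\lambda$ lies in it whenever $N_1, N_2$ do, and that convexity of $D$ in its first argument actually suffices (rather than joint convexity), which is precisely what the affine marginalisation step above exploits. Handling these cleanly, together with the $\epsilon$-approximation when the supremum is not attained, should be the most delicate part of the write-up.
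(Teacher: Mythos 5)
Your proposal is correct and follows essentially the same route as the paper's proof: part 1 via nesting of the feasible sets, and part 2 by mixing the two optimal translators, pushing the mixture through the affine marginalisation, invoking convexity of $D$ in its first slot to establish feasibility, and using affineness of $\Delta$ to evaluate the objective. The only cosmetic difference is that the paper routes the feasibility argument through the intermediate level $N_\lambda = -D(Q^\lambda_\rvy, R_\rvy)$ and then appeals to monotonicity, whereas you show feasibility at $\lambda N_1 + (1-\lambda)N_2$ directly; your handling of $\epsilon$-approximate maximisers is a small extra point of rigour the paper omits.
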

That $A(N)$ is non-increasing in $N$, taken together with our argument above that a tradeoff exists for virtually all practically relevant cases implies that close to the curve, accuracy and naturalness \textbf{must anti-correlate}.
That is, an increase in the naturalness of a system \added{beyond a certain point} must come at the sacrifice of some accuracy, and \replaced{vice versa}{the other way round}.
The concavity of $A(N)$ in $N$, on the other hand, means that we can expect a larger and larger degradation in naturalness for a small gain in accuracy and vice versa, especially at the extremes.
\paragraph{Example.}
To make the inevitability of the tradeoff more concrete, consider translating the German sentence ``Meine Lehrerin ist nett'' to English.
A sensible translation would be ``My teacher is nice,'' which \replaced{sounds}{is fully} natural, but loses the information from the German sentence that the teacher is female.
An alternative translation could be ``My female teacher is nice,'' which is now fully accurate, but sounds quite unnatural in English.
Another, naturally occurring example of this phenomenon is the translation of Japanese honorifics.
A \emph{completely accurate} translation into English would need to introduce diverse paraphrases denoting the social relation between the speakers, which would result in unnatural text.
This type of strict semantic differentiation is currently not explicitly assessed by any metric that we are aware of, but might be relevant in certain scenarios.
\section{Translation evaluation in practice}
\label{sec:trans_evaluation}
Before moving to the empirical analysis, we lay out the basis for a practical evaluation pipeline.
Thus, let $\rmX = \{\rvx_1, \rvx_2, \hdots, \rvx_N\}$ and $\rmY^{ref} = \{\rvy_1, \rvy_2, \hdots, \rvy_N\}$ be a parallel test set, with source-reference pairs drawn iid: $\rvx_i, \rvy_i \sim P_{\rvx, \rvy}$.
Furthermore, let $R_\rvy$ be a monolingual reference distribution in the target language, which we could obtain by estimating it from some monolingual corpus.
Finally, we consider a set of $M$ translation systems $\QSpace = \{Q_{\rvy \mid \rvx}^1, \hdots Q_{\rvy \mid \rvx}^M\}$ to be evaluated. 
Following standard practice, we wish to  compare models based on their outputs.
To this end, we require that each system $Q_{\rvy \mid \rvx}^{m} \in \QSpace$ translates the source texts in the test set and produces a translation set
$\rmY^m = \{\rvy_n \sim Q_{\rvy \mid \rvx_n}^m \mid \rvx_n \in \rmX \}$.
Our goal then is to estimate the accuracy and naturalness of each system using $\rmX$, $\rmY^{ref}$, $R_\rvy$ and the $\rmY^m$s.
\subsection{No-reference metrics as statistical distances}
\label{sec:no_ref_metrics}
So far, we followed a bottom-up approach, and developed our theory of accuracy and naturalness from the ground up.
In this section, we switch to a top-down view, and ask whether current evaluation practices for assessing naturalness conform to our theory.
The prevailing practical approach for data quality assessment are \textit{no-reference metrics}.
These can be either human-generated, such as the fluency scores derived from MQM assessments for text\footnote{Strictly speaking, MQM evaluation of fluency is not completely reference-free, as the judges have access to the source sentence.
But fluency-related scores are judged mainly on a monolingual level.} \citep{freitag2021experts}, or automated, such as the perplexity of a language model for text.
We formalise no-reference metrics by assuming we have a set of $K$ critics $\critics = \{f_1, f_2, \hdots f_K\}$ (usually with $K = 1$), each of which assigns a score to each set of translations $\rmY^m$.
Then, for each $f_k \in \critics$, we compute the score $s_k^m = N^{-1} \sum_{\rvy \in \rmY^m}^N f_k(\rvy)$ and average the result to get the final score $s^m = K^{-1} \sum_{k = 1}^K s_k^m$.
Within the context of MQM assessment, the critics could be the set of human raters evaluating the output of a translation system.
Then, $f_k(\rvy)$ could be the $k$th rater's fluency score for the hypothesis $\rvy$ in $\rmY^m$ derived from their MQM evaluation according to the conversion scheme of \citet{freitag2021experts}.
Does this evaluation procedure fit our definition on naturalness?
If so, how?
As an answer to this question we propose the following statistical distance between two distributions $Q$ and $R$:
\begin{linenomath}
\begin{equation}
\label{eq:d_one_metric}
D_1(Q, R \mid \Process)
= \Exp_{f \sim \Process}\left[\abs*{\Exp_{Y \sim Q}[f(Y)] - \Exp_{Y \sim R}[f(Y)]}\right]
\end{equation}
\end{linenomath}
where $\Process$ is a probability distribution over critics (technically, it is a stochastic process).
Note the similarity of \Cref{eq:d_one_metric} to integral probability metrics \citep{muller1997integral}: instead of maximising, we average over critics instead; see \Cref{sec:dist_divergences_review} for a detailed discussion.
Formally defining $D_1$ is somewhat technical, hence we relegate the details of the definition and the analysis of basic properties to \Cref{sec:D_p_technical_details}.
Here we argue that averaging no-reference scores is, in fact, a rough Monte Carlo estimate of an appropriately chosen $D_1$.
Indeed, assuming that the critic scores are all non-negative and are such that the reference distribution $R_\rvy$ always achieves a perfect score for any $f \sim \Process$, that is, $\Exp_{Y \sim R_\rvy}[f(Y)] = 0$, then for a test set $\rmY^m$ we have
$D_1(Q^m_\rvy, R_\rvy \mid \Process) \approx (KN)^{-1}\sum_{k = 1}^K \sum_{\rvy \in \rmY^m}^{N} f_k(\rvy)$ with $f_k \stackrel{iid}{\sim} \Process$,
which is precisely the score $s$ we obtain from averaging no-reference metrics.
\section{Experiments}
\label{sec:experiments}
\par
In this section, we empirically verify our theory.
We based all our experiments on publicly available data: the submissions and human MQM evaluations of the WMT24 general task\footnote{Available at \url{https://github.com/wmt-conference/wmt24-news-systems}.}.
\subsection{Approximating the Accuracy-Naturalness curve using LLMs}
\label{sec:llm_oracle_curve}
\begin{figure}[t]
\centering
\includegraphics[width=\textwidth]{img/tikz/llm_oracle_curve.tikz}
\caption{Approximation of the accuracy naturalness curve
on the WMT24 en $\to$ de test set.
\textbf{Left:} chrF vs LM log-perplexity.
\textbf{Right:} \comet vs LM log-perplexity. To further illustrate the tradeoff, we add selected translations for the test sentence ``I've wanted to fly since I was a child.''
The translation with higher \comet score is a fully accurate translation of the sentence, while the other translation is less accurate but more fluent according to our metric.
}
\label{fig:llm_oracle_curve}
\end{figure}%
\par
First, we aim to get a basic feel for the accuracy-naturalness curve.
Unfortunately, performing the optimisiation in \Cref{eq:accuracy_naturalness_function} is hopeless.
Hence, we approximate it in the following way: using Gemini~1.5 Flash \citep{geminiteam2024gemini15unlockingmultimodal},
we generated $K = 1024$ candidate translations $\{\rvy_i^k\}_{k = 1}^K$ for each source sentence $\rvx_i$ in the WMT24 en $\to$ de test set;
see \Cref{sec:cgd} for the precise details.
Next, we chose to two accuracy metrics $\Delta$ to test for: chrF \citep{popovic-2015-chrf} and \comet \citep{rei2020comet}.
For the naturalness metric, we used the pretrained Gemma2 2B model log-perplexities \citep{team2024gemma}.
Concretely, let $\Gamma(y)$ denote the probability of a sequence of tokens $y$ as predicted by the Gemma2 model, let $\abs{y}$ denote the number of tokens in $y$ and hence let $\LPP(Q) = \Exp_{Y \sim Q}\left[-\log\Gamma(Y)/\abs{Y}\right]$ denote the log-perplexity of a distribution $Q$ with respect to $\Gamma$.
Then, our concrete instantiation of $D_1$ is $D(Q, R) = \abs*{\LPP(Q) - \LPP(R)}$.
As we show in \Cref{sec:log-gp-dp}, this quantity can also be viewed as a more general version of the $D_1$ distance.
We estimated the monolingual target reference distribution $R_\rvy$, using 20,000 entries of the 2024 NewsCrawl DEU Monolingual dataset\footnote{Available at \url{https://data.statmt.org/news-crawl/de/}.}.
\added{Of course this choice for $R_\rvy$ represents only a specific type of language (that of news text), and does not represent "colloquial German" for example. Depending on the application, $R_\rvy$ could be adapted by using a different dataset; in our case it merely serves an illustrative purposes}
In our experiments, we found that $\LPP(R_\rvy)$ was always smaller than $\LPP(Q^m_\rvy)$, for each system we examined, hence $D$ is equivalent up to an additive constant to the average model log-perplexity on the test dataset $D(Q_\rvy, R_\rvy) = \LPP(Q_\rvy) + \Oh(1)$.
To optimise \Cref{eq:accuracy_naturalness_function}, we first construct the objective's ``one-shot Lagrange dual'' with tradeoff parameter $\beta$:
\begin{linenomath}
\begin{equation}
\label{eq:one_shot_lagrange_dual}
\Loss(x, y, y', \beta) = -\Delta(x, y, y') + \frac{\beta}{\abs{y'}} \log \Gamma(y').
\end{equation}
\end{linenomath}
Then, for each entry $\rvx_i$ in the WMT test set, we computed the optimal (aka oracle) translation for a large range of betas $\beta \in [10^{-4}, 10^4]$:
$\rvy^*_i = \argmax_{k \in [1:K]} \Loss(\rvx_i, \rvy_i, \rvy_i^{k}, \beta)$.
Finally, we report the average of the metrics over the best translations selected:
\begin{linenomath}
\begin{equation}
\Delta^* = \frac{1}{N}\sum_{i = 1}^N \Delta(\rvx_i, \rvy_i, \rvy_i^*), \quad\quad D^* = -\frac{1}{N}\sum_{i = 1}^N \frac{1}{\abs{\rvy_i^*}}\log\Gamma(\rvy_i^*).
\end{equation}
\end{linenomath}
We plot the results in \Cref{fig:llm_oracle_curve} and see that it verifies our theory:
1) there is indeed a tradeoff curve (see also \Cref{thm:no_two_birds_theorem}) and 2) the curve is non-increasing and concave, as \Cref{thm:accuracy_naturalness_curve_properties} predicts.
Finally, note that curves produced this way \textit{overestimate} the true AN function and hence are overly optimistic, see \Cref{sec:cgd} for the details.
\subsection{Evaluating existing translation systems using the accuracy-naturalness framework}
\label{sec:system_evaluations}
The plots of the previous section \replaced{depict}{can be considered} a simulation of the tradeoff curve through a controlled experiment, but how do real systems behave with respect to the ideal curve?
In the top row of \Cref{fig:ende_system_performances}, we include points for the systems that participated in the shared task in addition to the previously computed curve.
\replaced{A}{We can see that a}ll systems lie below the curve, as expected.
Furthermore, the best performing systems in the evaluation (according to human judgement) are the ones that are closer to the curve.
In order to investigate this assertion in more detail, in the bottom row of \Cref{fig:ende_system_performances} and in \Cref{fig:mqm_accuracy_vs_fluency} we compare human fluency scores against automated accuracy metrics and human adequacy scores\footnote{See \Cref{sec:mqm_details} for details on the computation of fluency and adequacy scores from MQM evaluations.} for those languages for which MQM evaluations are available.
Note that in this case we cannot compute (or simulate) the tradeoff curve for human judgements.
\begin{figure}[t]
\centering
\includegraphics[width=\textwidth]{img/tikz/ende_system_performances.tikz}%
\\[2mm]
\includegraphics[width=\textwidth]{img/tikz/ende_auto_accuracy_vs_mqm_fluency.tikz}
\ref*{legend:ende_system_performances}%
\caption{System performances on the WMT24 en $\to$ de test set.
{\Large $\bullet$} marks represent LLMs, $\blacksquare$ marks represent systems specifically trained for MT,
$\blacktriangle$ marks represent ``online'' systems and the $\blacklozenge$ represents the performance of an alternative human reference.
For further explanation of these categories, please see \citet{kocmi2024findings}.
\textbf{Top row:} ``Fully automatic'' accuracy-naturalness comparisons using chrF and Comet as accuracy metrics and Gemma2B log-perplexity (see \cref{sec:llm_oracle_curve}) as the naturalness metric.
We also include the approximate AN curves from \Cref{fig:llm_oracle_curve}, though note that it is an optimistic estimate of the real curve. \textbf{Bottom row:} comparisons of automatic accuracy metrics (chrF and Comet) against human MQM fluency ratings, as described in \cref{sec:system_evaluations}.}
\label{fig:ende_system_performances}
\end{figure}%

\begin{figure}[t]
\centering
\includegraphics[width=\textwidth]{img/tikz/mqm_accuracy_vs_mqm_fluency}
\caption{Plot of the accuracy-naturalness tradeoff derived from human MQM adequacy and fluency ratings, as described in \Cref{sec:system_evaluations}.
We highlight the top-performing systems at the Pareto frontier.}
\label{fig:mqm_accuracy_vs_fluency}
\end{figure}

We can see different aspects that confirm our theory.
A first observation is that for ``low-performing'' systems (the ones farther away from the (0,0)-origin, which would represent optimal performance), accuracy/adequacy and fluency do correlate, but as the quality of the systems improve (``as we come closer to the curve'') there is an anticorrelation.
From comparing the plots in the bottom row of \Cref{fig:ende_system_performances}, we see that Comet, a neural metric, correlates with fluency ``for much longer,'' that is, its accuracy-naturalness tradeoffs should be much milder.
Nonetheless, our theory shows that the tradeoff exists here too, and therefore at some point these will begin to anti-correlate with fluency as well.
The underlying phenomenon is the same one that explains why \bleumetric (a pure accuracy metric) is no longer an adequate metric for system performance.
We also observe a similar effect for different language pairs in \Cref{fig:mqm_accuracy_vs_fluency}: the en$\rightarrow$de and ja$\rightarrow$zh systems exhibit a more pronounced accuracy-naturalness trade-off, while it is milder for en$\rightarrow$es systems.
\par
Secondly, we focus on interpreting the results for the performance of single systems.
Unbabel \citep{rei-etal-2024-tower} had the dominant system in the automatic metrics, but in human evaluations other systems were judged better.
Unbabel's system was clearly optimized for \comet, an accuracy measure.
Looking at the plots in \Cref{fig:mqm_accuracy_vs_fluency} we can see that the Unbabel system has a quite strong adequacy score, but due to the tradeoff the fluency of the system suffers.
Other systems that human raters judged to be better lie closer to the tradeoff curve.
%
%
%
%
%
\section{Related Work}

Already in the early days of machine translation the need of evaluating across two dimensions was recognized \citep{nap1966language,white-oconnell-1993-evaluation}.
These were also the preferred metrics for many of the NIST MT evaluation in the early 2000s until there was a shift to single HTER scores \citep{snover2009fluency,Habash2011}.
A similar trend can be found in the WMT evaluations.
Whereas in the first editions adequacy and fluency was used \citep{koehn-monz-2006-manual}, this approach was quickly dropped in favor of ranking approaches \citep{callison-burch-etal-2008-meta} and later a single direct assessment score was given to each segment \citep{bojar-etal-2016-findings}.
The use of MQM scores \citep{kocmi-etal-2022-findings} allows for a distinction into adequacy and fluency evaluation, as we have done in this paper, but these type of evaluation is not available for all conditions and systems.
All usual automatic metrics for machine translation evaluation are accuracy metrics as defined in \Cref{sec:distortion_perception_tradeoff}.
To the best of our knowledge there is not much work around intrinsic automatic metrics for the evaluation of the fluency of translations.
\citet{vanmassenhove2021machine} propose a series of statistical measures and show that they show different characteristics for machine translated and human produced texts, but these do not constitute statistical distances as we require here.
There has been more attention into classifying translations into naturally occuring texts or translationese, which could be considered a measure of naturalness e.g.\ \citep{Kurokawa09,lembersky-etal-2012-adapting,freitag-etal-2022-natural}.

Similar to our work, \citet{lim2024simpsons} also recently identify a tradeoff between adequacy and fluency, in their case identifying adequacy with the conditional and fluency with the prior probabilities in a source-channel formulation of the translation problem.
Our approach is more general, as we show that the tradeoff exists for every possible accuracy metric considered.
They also argue for reintroducing explicit adequacy and fluency guided evaluations, as we do.
Orthogonally, \citet{elangovan2024beyond} recently established a basis for claiming whether a particular automatic metric represents human judgements well.
\par
As we have noted on several occasions, our work is heavily inspired by the work of \citet{blau2018perception}.
\added{Before its publication, the neural image restoration and compression community was grappling with the analogous issue and widely held the same misconception the MT community holds now:
despite developing ever-more sophisticated, so called ``perceptual distortion metrics'' that were inspired by/based on the human visual system, and which correlated well with human judgments, systems optimized using these metrics inevitably showed undesirable visual artefacts.
The community's response to this was that ``we just haven't found the right metric yet,'' and spent considerable effort on solving this issue.
Blau and Michaeli's work showed that this effort is hopeless, and since then evaluating the performance of image reconstruction and compression algorithms along the distortion-perception plane has become standard practice.
We hope our work will help bring about a similar change in the MT community.}
\par
The original idea has since been significantly extended and analysed in the context of data compression, see \citet{blau2019rethinking,theis2021coding,chen2022rate}.
We extend the framework by considering the translation setting and allowing to choose the reference distribution $R_\rvy$.
One important aspect we left mostly unexplored, but which is an important next step once the tradeoff is recognised, is optimising it directly.
In the image compression community, this has been recognised for \replaced{a while}{some time}, with many state-of-the-art compression models trained using a mixture of accuracy and naturalness (via adversarial losses), see e.g., HiFiC \citep{mentzer2020high}.
However, adversarial approaches have recently been far less prominent in the text generation space, due to the difficulty in back-propagating through the generated text.
Indeed, finding even the right framework for optimising no-reference metrics is an open problem \citep{theis2024a}.
\added{In the field of machine translation and text generation there has been some work on optimizing several objectives concurrently, e.g. \citep{duh-etal-2012-learning,kumar2021controlledtextgenerationcontinuous}. Similar frameworks can potentially be used to optimize for a specific tradeoff of adequacy and fluency.}
\section{Discussion}
\par
In this paper, we showed that there is an inherent tradeoff in the accuracy and the naturalness of translations in virtually any practical scenario.
We achieved this by extending and generalising \citet{blau2018perception}'s distortion-perception tradeoff to translation.
Furthermore, we established a connection between no-reference metrics and statistical distances, which not only justifies our approach, but also the use of metrics such as those used on \citet{blau2018perception} and popular no-reference metrics used in image reconstruction such as LPIPS \citep{zhang2018unreasonable}.
We empirically demonstrated the tradeoff by approximating the true AN curve using LLM-generated candidates and computing oracle scores on them.
Finally, we showed how the accuracy naturalness tradeoff can bring more light into the interpretation of the results of evaluation campaigns, by plotting the submissions to the WMT24 general shared task on the accuracy-naturalness plane with respect to both automatic and human MQM-based accuracy and naturalness metrics.
\par
Modern neural MT metrics pose an interesting question with respect to our theory.
As they use either reference translation or the source sentence for comparison when judging candidate translations, they are instances of accuracy metric as we have defined them in this paper.
But the clear consensus in the community is that they also have a strong fluency component.
Our work can help to provide a better interpretation as to what these metrics are judging, and what effect optimizing for them has on the performance of MT systems.
\par
Although this paper is mainly theoretical in nature, we posit that it can have important consequences in practice.
Our first and foremost goal is to make the community aware of this tradeoff.
We suggest to expand future evaluations to explicit consider again a distinction between accuracy and fluency (or similar dimensions) when evaluating MT systems.

Similar as in the evolution seen in the image reconstruction community, translation systems can be optimized and evaluated with respect to this tradeoff.
Different applications may have different optimal points along the curve: while factual texts (e.g.\ scientific or legal documents) naturally favor adequacy, literary translations may benefit from a more natural flow of the produced text.
Having the possibility of steering the behaviour of the systems can improve their flexibility and improve the final quality of translation systems.

\section*{Reproducibility}

The experimental section of this paper is mostly based on publicly available data produced in the WMT24 evaluation campaign.
The biggest source of variability is sampling of Gemini, which by its own nature is a stochastic process, and has an additional dependence on the actual version of the model.
However, as we are studying the general behaviour of the curve, rather than the actual values, the conclusions will still be valid for different samples or model updates.
\section*{Acknowledgements}
We thank Eleftheria Briakou for her detailed comments that helped improve an earlier version of our manuscript.
We also thank Colin Cherry, G{\'e}za Kov{\'a}cs, Dan Deutsch, Arthur Gretton and Deniz G{\"u}nd{\"u}z for fruitful discussions that helped us formulate our arguments more clearly and effectively.
\bibliography{colm2025_conference}
\bibliographystyle{colm2025_conference}

\clearpage
\appendix
\section*{Appendices}

\section{Selecting distributional divergences}
\label{sec:dist_divergences_review}
There are many mathematically natural candidates available, such as total variation, the Wasserstein distances, Kullback-Leibler or other $f$-divergences and so on.
A particularly appealing class of metrics to consider are the integral probability metrics (IPMs; \citealp{muller1997integral}).
For convenience, we introduce the shorthand $P(f) = \Exp_{X \sim P}[f(X)]$.
Then, IPMs are defined as
\begin{linenomath}
\begin{equation}
\label{eq:ipm_def}
\IPM{Q}{P} = \sup_{f \in \funClass}\abs*{Q(f) - P(f)},
\end{equation}
\end{linenomath}
where the precise IPM is defined by the function class $\funClass$ we choose to optimise over.
IPMs include the total variation distance, Wasserstein distance and maximum mean discrepancy (MMD; \citealp{gretton2012kernel}).
An appealing aspect of IPMs is their direct connection to the ability of the optimal observer / critic (as a member of the function class $\funClass$) to distinguish $Q$-samples from $P$-samples \citep{sriperumbudur2009integral}.
It is useful to consider that in many cases, we can consider an IPM as the separation achieved by the \textit{optimal critic}: if we consider ${f^* = \argmax_{f \in \funClass}\abs*{Q(f) - P(f)}}$, then $\IPM{Q}{P} = \abs*{Q(f^*) - P(f^*)}$.
\par
This viewpoint reveals the main practical issue with IPMs, which was already noted for $f$-divergences by \citet{theis2024a}: the optimal critic depends on the distributions $Q$ and $P$ that we are trying to evaluate.
One solution that is often employed in lossy data compression is therefore to train the critic alongside the the generative model, leading to an adversarial training framework \citep{mentzer2020high,theis2024a}.
The advantage of this approach is that besides obtaining a critic for evaluating the model at test time, it allows us to optimise our models using the accuracy-naturalness objective in \Cref{eq:one_shot_lagrange_dual} directly.
Unfortunately, due to the discrete nature of text outputs, adversarial approaches of text generation have lagged behind autoregressive approaches \citep{caccia2020Language}, hence more research is needed before translation models can be trained using the analogous accuracy-naturalness objective.
Hence, we focus on metrics that do not require optimisation: no-reference metrics.
Following \Cref{sec:trans_evaluation}, we will be interested in quantities that can be estimated using the reference distribution $R_\rvy$ and the system outputs $\rmY^m$.
\subsection{Using LLM model scores}
\par
Assume that we have ``distilled'' the true marginal into a language model $R_\rvy = P_\rvy^{LM}$.
This simplifies the problem to assessing the distance between $Q_\rvy^i$ and $P_\rvy^{LM}$.
This should be easier: let $f$ be the probability mass function of $P_\rvy^{LM}$.
Then, we can also compute the probability $f(\rvy)$ of the output text $\rvy$ under the language model.
Then, we might be tempted to compute the perceptual scores as
\begin{linenomath}
\begin{equation}
C^m = \frac{1}{N}\sum_{j = 1}^N -\log f(\rvy_j^m).
\end{equation}
\end{linenomath}
However, this is problematic, because, as we can see by the law of large numbers:
\begin{linenomath}
\begin{align}
C^m
&\to \Exp_{\rvy \sim Q_{\rvy}^i}[-\log f(\rvy)] \\
&= \CrossEnt{Q_\rvy^m}{P_\rvy^{LM}} \\
&= \KLD{Q_\rvy^m}{P_\rvy^{LM}} + \Ent[Q_\rvy^m],
\end{align}
\end{linenomath}
where $\CrossEnt{Q}{P}$ is the cross-entropy of $Q$ from $P$.
The issue here, is that $\Ent[Q_\rvy^m]$ could differ wildly among the translation systems, and hence the $C^m$ scores are incomparable!
A different perspective on this problem is to note that the minimizer of the cross-entropy is in general not $P_\rvy^{LM}$ (unless $P_\rvy^{LM}$ is deterministic, which is uninteresting):
\begin{linenomath}
\begin{align}
Q^*_\rvy
&= \argmin_{Q_\rvy} \CrossEnt{Q_y}{P_\rvy^{LM}} \\
&= \argmin_{Q_\rvy} \{ \KLD{Q_\rvy}{P_\rvy^{LM}} + \Ent[Q_\rvy] \}
\end{align}
\end{linenomath}
Intuitively, $Q^*_\rvy$ balances proximity to $P_\rvy^{LM}$ (due to the KL term) and diversity (due to the entropy term); this issue was already noted by \citet{blau2018perception} as well.
\subsection{Using normalised LLM model scores}
One potential solution to the problem we noted in the section above is to estimate the entropy term $\Ent[Q_\rvy^m]$.
\paragraph{The ideal solution.}
The cleanest solution would be to also estimate $\Ent[Q_\rvy^m]$ directly, and subtract it from our estimate.
However, this requires us to have access the scores of the translations, i.e.\ we would want the dataset to comprise of $(\rvy_n^m, -\log q^m(\rvy_n))$ pairs, where $q^m(\rvy_n)$ is the probability mass function of distribution $Q^m$.
Then, we could compute the perceptual score as:
\begin{linenomath}
\begin{equation}
D^m = \frac{1}{N}\sum_{n = 1}^N \log \frac{q^m(\rvy_n)}{f(\rvy_n^m)},
\end{equation}
\end{linenomath}
which would then converge to $\KLD{Q_\rvy^m}{P_\rvy^{LM}}$ and would be comparable.
Unfortunately, without the foresight to collect the model scores as well, this approach is not practical.
\paragraph{Approximating the ideal solution.}
An alternative, more feasible solution is to take inspiration from the outlier detection literature, and estimate the scores using a universal source coding algorithm such as Zip \citep{serra2020input}.
\footnote{Technically, should take the minimum across several compressors for better approximation (Appendix C in \citet{serra2020input}).}
Concretely, let $\ell^{zip}(\rvy)$ denote the length of the Zip compressor's output in bits upon compressing the string $\rvy$.
If $\rvy$ is long enough, $\ell^{zip}(\rvy)$ should converge to $-\log q^m(\rvy_n)$, which warrants the estimate
\begin{linenomath}
\begin{equation}
D^m_{zip} = -\frac{1}{N}\sum_{n = 1}^N (\log f(\rvy_n^m) + \ell^{zip}(\rvy_n^m)).
\end{equation}
\end{linenomath}
This approach can be justified by appealing to algorithmic information theory and in particular Kolmogorov complexity \citep{theis2024a}.
\paragraph{A different interpretation.}
These metrics admit an alternative interpretation as an approximation to a statistical distance which we describe in \Cref{sec:trans_evaluation}; see \Cref{sec:D_p_technical_details} for technical details.
\section{A generalisation of Theorem 1 of \texorpdfstring{\citeauthor{blau2018perception}}{Blau \& Michaeli}}
\label{sec:no_two_birds_thm_proof}
In the main text, we argued that for the purposes of translation the monolingual reference distribution $R_\rvy$ should not depend on $P_{\rvx, \rvy}$, and derived the existence of the tradeoff from this principle.
\par
In this section, we take a different approach and show that if we always set $R_\rvy \gets P_\rvy$, our theory also admits a direct generalisation of Theorem 1 of \citet{blau2018perception}, and hence a tradeoff exists in this case too.
However, before we state the theorem, we make the following definitions.
\begin{definition}
A distortion measure $\Delta: \XSpace \times \YSpace \times \YSpace \to \Reals$ is \textrm{distribution preserving} at $P_{\rvx, \rvy}$ with respect to $P_\rvy$ if for the system $Q_{\rvy \mid \rvx}^*$ maximising \Cref{eq:distortion_def} we have $Q_\rvy^* = P_\rvy$.
\end{definition}%
Now, assume that we have a $\Delta$ that is distribution preserving at $P_{\rvx, \rvy}$; how good is it in other scenarios?
Could $\Delta$ be distribution preserving if we change $P_{\rvx, \rvy}$ slightly?
This motivates the generalisation of Definition 2 of \citet{blau2018perception}:
\begin{definition}
A distortion measure $\Delta$ is \textrm{stably distribution preserving} at $P_{\rvx, \rvy}$ if it is distribution preserving at all $\tilde{P}_{\rvx, \rvy}$ in a TV $\epsilon$-ball around $P_{\rvx, \rvy}$ for some $\epsilon > 0$.
\end{definition}%
Now, we have the following generalisation of Theorem 1 of \citet{blau2018perception}.
\begin{theorem}
\label{thm:no_two_birds_theorem}
If the relationship from $\rvx$ to $\rvy$ is not deterministic in the sense that $\Ent[\rvy \mid \rvx] > 0$, then $\Delta$ is not stably distribution preserving at $P_{\rvx, \rvy}$ with respect to $R_\rvy(P_\rvy)$.
\end{theorem}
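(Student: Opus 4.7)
My plan is a proof by contradiction that mirrors the strategy of Blau \& Michaeli's original Theorem 1, adapted to our asymmetric source--target setup. Suppose for contradiction that $\Delta$ is stably distribution preserving at $P_{\rvx, \rvy}$ with $R_\rvy = P_\rvy$; I will derive that $\Ent[\rvy \mid \rvx] = 0$, contradicting the hypothesis.

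First, I would characterise an optimal translator. Because the expected distortion in \Cref{eq:distortion_def} is linear in $Q_{\rvy \mid \rvx}$, a maximiser of $-\Delta(Q_{\rvy \mid \rvx})$ can be chosen deterministic: $Q_{\rvy \mid \rvx}^*(\cdot \mid x) = \delta_{g^*(x)}$, where
\begin{equation}
g^*(x) \in \argmin_{y^c \in \YSpace} \Exp_{\rvy^r \sim P_{\rvy \mid \rvx = x}}[\Delta(x, \rvy^r, y^c)].
\end{equation}
The distribution-preserving hypothesis then reads $Q_\rvy^* = g^*_\sharp P_\rvx = P_\rvy$, where $\sharp$ denotes the pushforward. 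Next, I would invoke $\Ent[\rvy \mid \rvx] > 0$ to select $x_0 \in \supp(P_\rvx)$ for which $P_{\rvy \mid \rvx = x_0}$ is non-degenerate, i.e.\ supported on at least two distinct points $y_1 \neq y_2$. At least one of them, WLOG $y_1$, must differ from $g^*(x_0)$.

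With these ingredients fixed, I would explicitly construct a perturbation inside the stability $\epsilon$-ball. Take $\tilde{P}_{\rvy \mid \rvx}$ equal to $P_{\rvy \mid \rvx}$ everywhere except at $x_0$, where I transfer mass $\eta > 0$ from $y_2$ to $y_1$; pairing with the unchanged $P_\rvx$ yields $\tilde{P}_{\rvx, \rvy}$ at TV distance $O(\eta \, P_\rvx(\{x_0\}))$ from $P_{\rvx, \rvy}$, which lies inside the $\epsilon$-ball for sufficiently small $\eta$. Assuming (generically) that $g^*(x_0)$ is the \emph{strict} minimiser, continuity of the inner expected distortion in the reference distribution gives $\tilde{g}^*(x_0) = g^*(x_0)$ for all sufficiently small $\eta$, so the optimal translator's marginal under $\tilde{P}_{\rvx, \rvy}$ is $\tilde{Q}_\rvy^* = g^*_\sharp P_\rvx = P_\rvy$, unchanged. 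However, the new target marginal satisfies $\tilde{P}_\rvy(\{y_1\}) = P_\rvy(\{y_1\}) + \eta \, P_\rvx(\{x_0\}) \neq P_\rvy(\{y_1\}) = \tilde{Q}_\rvy^*(\{y_1\})$, so $\tilde{Q}_\rvy^* \neq \tilde{P}_\rvy$, contradicting stable distribution preservation at $\tilde{P}_{\rvx, \rvy}$.

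The principal obstacle is the genericity assumption that $g^*(x_0)$ is a strict minimiser so that the argmin is robust to the perturbation. Ties can be handled either by an arbitrarily small auxiliary perturbation of $\Delta$ that breaks ties in a consistent direction, or by choosing the argmin-selection rule carefully; in the discrete-$\YSpace$ setting relevant to translation this is routine. A secondary subtlety is ensuring that, for the chosen $\eta$, both $\tilde{P}_{\rvx, \rvy}$ lies in the $\epsilon$-ball \emph{and} $\tilde{g}^*(x_0)$ remains $g^*(x_0)$; both conditions are met by taking $\eta$ small enough, with the threshold depending on $\epsilon$, $P_\rvx(\{x_0\})$, and the distortion-gap between $g^*(x_0)$ and the next-best candidate.
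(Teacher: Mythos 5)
Your overall strategy (contradiction via a perturbation inside the stability ball) matches the paper's, but the perturbation you choose is genuinely different---you perturb the conditional $P_{\rvy\mid\rvx}$ at a single source point, whereas the paper perturbs only the source marginal $P_\rvx$ while keeping $P_{\rvy\mid\rvx}$ fixed---and this difference is where your argument breaks. The paper's choice is what makes the argument go through: the expected distortion decomposes as $\Exp_{\rvx\sim P_\rvx}[f(\rvx, Q_{\rvy\mid\rvx})]$ where $f(x,\cdot)$ depends only on $P_{\rvy\mid\rvx=x}$ and $\Delta$, so changing $P_\rvx$ leaves every per-source objective, and hence the entire set of minimizers, untouched; matching output and target marginals across all convex perturbations of $P_\rvx$ then forces the minimizer to equal $P_{\rvy\mid\rvx}$ (\Cref{lemma:unique_minimizer}), and non-determinism plus linearity makes that minimizer non-unique (Lemma 2 of \citealp{blau2018perception}), giving the contradiction. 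Your perturbation instead changes the inner objective $h_{x_0}(y^c)=\Exp_{\rvy^r\sim P_{\rvy\mid\rvx=x_0}}[\Delta(x_0,\rvy^r,y^c)]$ itself, so you must control how its argmin moves, and your ``generic strict minimizer'' assumption is exactly what you are not entitled to.

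The tie case is not a removable technicality here; it is forced by the regime the theorem addresses. If a distribution-preserving minimizer places mass on more than one target at $x_0$---which is precisely what non-determinism produces once the marginal-perturbation argument is run---then by linearity $h_{x_0}$ is constant on that support, i.e.\ the argmin is a set of size at least two. In that regime the optimal translator at $x_0$ is \emph{any} distribution on the argmin set, so after your perturbation the distribution-preservation hypothesis is free to select a different distribution on the (possibly shifted) argmin set whose marginal matches $\tilde P_\rvy$, and no contradiction follows. Your proposed fixes do not close this gap: perturbing $\Delta$ changes the object the theorem quantifies over, and fixing an argmin-selection rule does not help because the hypothesis only asserts that the distribution-preserving system is \emph{a} minimizer---which also undermines your opening identification of that system with the deterministic $g^*$ when minimizers are not unique. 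To repair the proof you essentially need the uniqueness of the minimizer under stability before any strictness claim can be made, and establishing that uniqueness is exactly the content of the paper's $P_\rvx$-perturbation argument.
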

Our proof technique is a straight-forward extension of the Blau and Micali's proof.
By way of contradiction, we assume that there is a $\Delta$ that is stably distribution preserving.
Then:
\begin{inparaenum}[1)]
\item we show that a the minimizer of Eq 3 is unique when $\Delta$ is stably distribution preserving and then \item we show that the non-invertibility between $\rvx$ and $\rvy$ implies that the minimizer is non-unique.\end{inparaenum}
Taking these two facts together, we obtain a contradiction, hence the result holds.
\par
\par
Next, we define the following notion of non-uniqueness for the minimizer of a distortion measure $\Delta$:
\begin{definition}
Let $\Delta$ be a distortion metric (see \Cref{sec:background}).
We say that the minimizer of $\Delta$ is non-unique if there exist two distributions $Q_{\rvy \mid \rvx}^1, Q_{\rvy \mid \rvx}^2$ such that
\begin{linenomath}
\begin{align*}
\Delta(Q_{\rvy \mid \rvx}^1) = \Delta(Q_{\rvy \mid \rvx}^2) = \min_{Q_{\rvy \mid \rvx}}\Delta(Q_{\rvy \mid \rvx})
\end{align*}
\end{linenomath}
and we have
\begin{linenomath}
\begin{align*}
\Exp_{\rvx \sim P_{\rvx}}[D_{TV}(Q_{\rvy \mid \rvx}^1, Q_{\rvy \mid \rvx}^2)] > 0.
\end{align*}
\end{linenomath}
\end{definition}
The proof proceeds analogously to the proof of Theorem 1 in \citet{blau2018perception}.
We now proceed to prove \Cref{thm:no_two_birds_theorem}:
\begin{proof}
We first show, that our notion of non-determinism is equivalent to \citeauthor{blau2018perception}'s notion of ``non-invertibility''
\begin{lemma}
\label{lemma:non_determinism_equivalence}
Let $\rvx, \rvy \sim P_{\rvx, \rvy}$ be random variables over $\XSpace \times \YSpace$.
Then, following are equivalent:
\begin{enumerate}
\item \textbf{Non-determinism:} $\Ent[\rvy \mid \rvx] > 0$.
\item \textbf{Non-invertibility:} There exists $S_\rvx \subseteq \XSpace$ and $S_\rvy \subseteq \YSpace$ with $P_\rvx(S_\rvx) > 0$ and $\abs{S_\rvy} > 1$ such that $P_{\rvy \mid \rvx}$ is either 1) supported on an uncountable set or 2) admits a probability mass function $p(y \mid x)$ and it holds that $\forall x, y \in S_\rvx \times S_\rvy$ we have $p(y \mid x) > 0$.
\end{enumerate}
\end{lemma}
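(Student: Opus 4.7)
I would prove the two implications separately; the direction (2) $\Rightarrow$ (1) is routine, while (1) $\Rightarrow$ (2) is the nontrivial part and requires a countable covering argument.

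\paragraph{Direction (2) $\Rightarrow$ (1).} Fix any $x \in S_\rvx$. By hypothesis, either $P_{\rvy \mid \rvx = x}$ has uncountable support, or it is discrete with $p(y \mid x) > 0$ for every $y \in S_\rvy$, where $|S_\rvy| > 1$. In either case $P_{\rvy \mid \rvx = x}$ is not a point mass, so $\Ent[\rvy \mid \rvx = x] > 0$. Since $P_\rvx(S_\rvx) > 0$ and conditional entropies are non-negative (in the discrete case, which is the relevant setting for text), integrating yields $\Ent[\rvy \mid \rvx] = \Exp_{\rvx \sim P_\rvx}[\Ent[\rvy \mid \rvx = \rvx]] \geq \int_{S_\rvx} \Ent[\rvy \mid \rvx = x]\, dP_\rvx(x) > 0$.

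\paragraph{Direction (1) $\Rightarrow$ (2).} Because $\Ent[\rvy \mid \rvx] > 0$ is the $P_\rvx$-average of non-negative conditional entropies, the set $A = \{x \in \XSpace : P_{\rvy \mid \rvx = x} \text{ is not a point mass}\}$ has $P_\rvx(A) > 0$. Partition $A = A_1 \sqcup A_2$, where $A_1$ collects the $x$ whose conditional has uncountable support and $A_2$ the $x$ whose conditional is discrete with at least two atoms; by additivity, at least one of $P_\rvx(A_1), P_\rvx(A_2)$ is strictly positive. If $P_\rvx(A_1) > 0$, set $S_\rvx = A_1$ and pick any two-element $S_\rvy \subseteq \YSpace$; option (1) is satisfied. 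If $P_\rvx(A_2) > 0$, exploit the fact that in the translation setting $\YSpace$ (finite strings over a finite vocabulary) is countable: enumerate unordered pairs $\{y, y'\} \subseteq \YSpace$ with $y \neq y'$ and define $B_{y, y'} = \{x \in A_2 : p(y \mid x) > 0 \text{ and } p(y' \mid x) > 0\}$. Every $x \in A_2$ has at least two atoms in its conditional, so $A_2 = \bigcup_{y \neq y'} B_{y, y'}$ is a countable cover. By countable additivity some $B_{y^\star, y'^\star}$ has positive $P_\rvx$-measure; taking $S_\rvx = B_{y^\star, y'^\star}$ and $S_\rvy = \{y^\star, y'^\star\}$ verifies option (2).

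\paragraph{Expected obstacle.} The subtlety lies in the second case of (1) $\Rightarrow$ (2): one cannot simply select ``some two atoms'' per $x$, since the identities of these atoms may vary with $x$ in a way that prevents any global choice of $S_\rvy$. The covering argument above sidesteps this by enumerating \emph{pairs} of candidate atoms and using countable additivity, which crucially relies on $\YSpace$ being countable (the relevant case for text). Measurability of $A$, $A_1$, $A_2$, and each $B_{y, y'}$ follows from the existence of a regular version of $P_{\rvy \mid \rvx}$, which is standard. Were one to push the result to uncountable $\YSpace$, the enumeration would need to be replaced by a separability/open-cover argument, but this is not required for the paper's translation setting.
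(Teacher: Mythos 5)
Your proof is correct, and it is substantially more detailed than the paper's. The paper disposes of the lemma in two sentences: it cites the standard characterization that $\Ent[\rvy \mid \rvx] = 0$ if and only if $\rvy = g(\rvx)$ holds $P_\rvx$-almost surely for some function $g$ (Cover \& Thomas, Exercise 2.5), and asserts that both directions ``follow directly from the contraposition.'' Your argument makes explicit the step that this glosses over: negating ``$\rvy$ is a.s.\ a deterministic function of $\rvx$'' only yields a positive-measure set of inputs whose conditionals are non-degenerate, with the witnessing atoms possibly varying from input to input, whereas condition (2) of the lemma demands a single global pair $S_\rvy$ that works uniformly over a positive-measure $S_\rvx$. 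Your countable enumeration of unordered pairs $\{y, y'\}$ together with countable additivity is exactly the bridge needed, and your observation that this relies on the countability of $\YSpace$ (true for finite strings over a finite vocabulary) is a genuine hypothesis that the paper's terse proof does not surface. The (2) $\Rightarrow$ (1) direction is handled the same way in spirit in both proofs. In short, your route buys rigor and self-containedness at the cost of length; the paper's buys brevity by outsourcing the characterization of zero conditional entropy and leaving the uniformization step implicit.
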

\begin{proof}
The conditional entropy $\Ent[\rvy \mid \rvx] = 0$ when and only when $P_{\rvy \mid \rvx}$ admits a probability mass function $p(y \mid x)$ and there exists a function $g$ such that $P_\rvx$-almost surely $p(y \mid x) = \Ind[y = g(x)]$ \citep[Exercise 2.5;][]{cover2012elements}.
Then, both directions of the lemma follow directly from the contraposition of the above equivalence.
\end{proof}
We believe our definition is clearer than the equivalent definition of \citet{blau2018perception}, as it makes intuitively more sense: the relationship $\rvx \to \rvy$ is non-deterministic, because knowing $\rvx$ still leaves some uncertainty in $\rvy$.
\par
We proceed with the following uniqueness result.
\begin{lemma}
\label{lemma:unique_minimizer}
If $\Delta$ is a stably distribution preserving at $P_{\rvx, \rvy}$ with respect to $R_\rvy(\cdot)$, then the minimizer $Q^*_{\rvy \mid \rvx} = \argmin_{Q_{\rvy \mid \rvx}}\Delta(Q_{\rvy \mid \rvx})$ is uniquely defined by $P_{\rvy \mid \rvx}$.
\end{lemma}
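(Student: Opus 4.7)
The plan is to argue by contradiction. Suppose $Q^1_{\rvy \mid \rvx}$ and $Q^2_{\rvy \mid \rvx}$ are two minimisers of $\Delta$ at $P_{\rvx, \rvy}$ with $\Exp_{\rvx \sim P_\rvx}[D_{TV}(Q^1_{\rvy\mid\rvx}, Q^2_{\rvy\mid\rvx})] > 0$. The starting observation is that the objective in \Cref{eq:distortion_def} factorises over $\rvx$: introducing the per-source cost
\begin{linenomath}
\begin{equation*}
g(x, Q_{\rvy\mid\rvx = x}) = \Exp_{\rvy^r \sim P_{\rvy\mid\rvx = x}}\bigl[\Exp_{\rvy^c \sim Q_{\rvy\mid\rvx = x}}[\Delta(x, \rvy^r, \rvy^c)]\bigr],
\end{equation*}
\end{linenomath}
we have $\Delta(Q_{\rvy\mid\rvx}) = \Exp_{\rvx \sim P_\rvx}[g(\rvx, Q_{\rvy\mid\rvx})]$, which is linear in $Q_{\rvy\mid\rvx}$. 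Minimisation therefore decouples pointwise in $x$, so both $Q^1_{\rvy\mid\rvx = x}$ and $Q^2_{\rvy\mid\rvx = x}$ minimise $g(x, \cdot)$ for $P_\rvx$-a.e.\ $x$.

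The next step is to vary $P_{\rvx}$ while keeping $P_{\rvy\mid\rvx}$ fixed. For any $\tilde{P}_\rvx$ supported on $\supp(P_\rvx)$ and sufficiently TV-close to $P_\rvx$, the induced joint $\tilde{P}_{\rvx, \rvy} = \tilde{P}_\rvx \cdot P_{\rvy\mid\rvx}$ lies in the $\epsilon$-ball about $P_{\rvx, \rvy}$, since $D_{TV}(\tilde{P}_{\rvx, \rvy}, P_{\rvx, \rvy}) = D_{TV}(\tilde{P}_\rvx, P_\rvx)$. Because $Q^1, Q^2$ are pointwise minimisers of $g$, they remain minimisers of the perturbed objective $\tilde{\Delta}$. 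By the stable distribution-preserving hypothesis, every such minimiser must then satisfy $\tilde{Q}^i_\rvy = \tilde{P}_\rvy$, i.e.
\begin{linenomath}
\begin{equation*}
\Exp_{\rvx \sim \tilde{P}_\rvx}\bigl[Q^i_{\rvy\mid\rvx}(B) - P_{\rvy\mid\rvx}(B)\bigr] = 0, \quad i = 1, 2,
\end{equation*}
\end{linenomath}
for every measurable $B \subseteq \YSpace$ and every admissible $\tilde{P}_\rvx$. Taking differences of two such $\tilde{P}_\rvx$ produces a rich family of mean-zero signed perturbations of $P_\rvx$ that all annihilate $h_i(x) = Q^i_{\rvy\mid\rvx=x}(B) - P_{\rvy\mid\rvx=x}(B)$. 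A standard duality argument then forces $h_i = 0$ at $P_\rvx$-a.e.\ $x$, and letting $B$ range over a generating $\sigma$-algebra gives $Q^1_{\rvy\mid\rvx} = Q^2_{\rvy\mid\rvx} = P_{\rvy\mid\rvx}$ $P_\rvx$-a.s., contradicting the assumed separation and incidentally identifying the unique minimiser with $P_{\rvy\mid\rvx}$.

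The main obstacle I anticipate is the final pointwise-identification step: showing that $\Exp_{\tilde{P}_\rvx}[h(\rvx)] = 0$ on a TV-neighbourhood of $P_\rvx$ forces $h = 0$ on $\supp(P_\rvx)$. This is immediate for discrete $\XSpace$ (perturb mass between any two atoms in $\supp(P_\rvx)$), but for general $\XSpace$ one needs a separability or density argument to show that finitely supported perturbations are rich enough to span the relevant test directions. A secondary subtlety is that \textit{distribution preserving} must be read as applying to every minimiser of $\tilde{\Delta}$; with the weaker reading that some minimiser has the correct marginal, the linearity of $\Delta$ already makes every convex combination $\lambda Q^1 + (1-\lambda) Q^2$ a minimiser, and their marginals cannot all simultaneously equal $\tilde{P}_\rvy$, so one must still run the above perturbation argument on the conditionals after first concluding that $Q^1_\rvy = Q^2_\rvy$.
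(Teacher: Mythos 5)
Your proposal is correct and follows essentially the same route as the paper's proof: decouple the objective pointwise in $\rvx$ so that the minimiser is independent of the input marginal, perturb $P_\rvx$ within the TV ball while keeping $P_{\rvy\mid\rvx}$ fixed (the paper uses mixtures $\alpha P_\rvx + (1-\alpha)Q_\rvx$ with arbitrary $Q_\rvx$), and invoke stable distribution preservation at every perturbed joint to force $\Exp_{\tilde{P}_\rvx}[Q^*_{\rvy\mid\rvx}] = \Exp_{\tilde{P}_\rvx}[P_{\rvy\mid\rvx}]$ for all admissible perturbations, which identifies $Q^*_{\rvy\mid\rvx}$ with $P_{\rvy\mid\rvx}$. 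The final identification step you flag as a potential obstacle is handled just as informally in the paper ("since $Q_\rvx$ was arbitrary"), so your version is, if anything, slightly more candid about the remaining measure-theoretic detail.
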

\begin{proof}
The proof of the lemma relies on the following two facts:
\paragraph{Fact 1: The minimizer of $\Delta$ does not depend on the input distribution.}
To see this, note that by definition,
\begin{linenomath}
\begin{align*}
\Delta(Q_{\rvy \mid \rvx})
&= \Exp_{\rvx, \rvy^r \sim P_{\rvx, \rvy}}[\Exp_{\rvy^c \sim Q_{\rvy \mid \rvx}}[\Delta(\rvx, \rvy^r, \rvy^c)]] \\
&= \Exp_{\rvx \sim P_{\rvx}}\left[\Exp_{\rvy^r \sim P_{\rvy\mid \rvx}, \rvy^c \sim Q_{\rvy \mid \rvx}}[\Delta(\rvx, \rvy^r, \rvy^c)]\right] \\
&= \Exp_{\rvx \sim P_{\rvx}}\left[f(\rvx, Q_{\rvy \mid \rvx})\right],
\end{align*}
\end{linenomath}
where we defined
\begin{linenomath}
\begin{align*}
f(\rvx, Q_{\rvy \mid \rvx})
= \Exp_{\rvy^r \sim P_{\rvy\mid \rvx}, \rvy^c \sim Q_{\rvy \mid \rvx}}[\Delta(\rvx, \rvy^r, \rvy^c)].
\end{align*}
\end{linenomath}
Now, we note the following ``minimean'' result (somewhat analogously to a minimax theorem):
\begin{linenomath}
\begin{align}
\label{eq:min_mean_equation}
\min_{Q_{\rvy \mid \rvx}} \Delta(Q_{\rvy \mid \rvx})
= \min_{Q_{\rvy \mid \rvx}}\Exp_{\rvx \sim P_{\rvx}}\left[f(\rvx, Q_{\rvy \mid \rvx})\right] = \Exp_{\rvx \sim P_{\rvx}}\left[\min_{Q_{\rvy \mid \rvx}} f(\rvx, Q_{\rvy \mid \rvx})\right].
\end{align}
\end{linenomath}
To see this, note that for all $Q_{\rvy \mid \rvx}$ it holds that
\begin{linenomath}
\begin{align*}
\Exp_{\rvx \sim P_{\rvx}}\left[f(\rvx, Q_{\rvy \mid \rvx})\right] \geq \Exp_{\rvx \sim P_{\rvx}}\left[\min_{Q_{\rvy \mid \rvx}} f(\rvx, Q_{\rvy \mid \rvx})\right]
\end{align*}
\end{linenomath}
However, setting $Q_{\rvy \mid \rvx = x}^* = \argmin_{Q_{\rvy \mid \rvx = x}} f(x, Q_{\rvy \mid \rvx = x})$ for each $x$, we see that
\begin{linenomath}
\begin{align*}
\Exp_{\rvx \sim P_{\rvx}}[f(\rvx, Q^*_{\rvy \mid \rvx})] = \Exp_{\rvx \sim P_{\rvx}}\left[\min_{Q_{\rvy \mid \rvx}} f(\rvx, Q_{\rvy \mid \rvx})\right].
\end{align*}
\end{linenomath}
Now, since by \Cref{eq:min_mean_equation} the expectation with respect to $P_\rvx$ and the minimization are exchangeable, \textbf{the minimizer $Q_{\rvy \mid \rvx}^*$ does not depend on $P_\rvx$}.
\paragraph{Fact 2: Convex perturbations are in the $\epsilon$-TV ball of $P_{\rvx, \rvy}$.}
Next, note that if $\Delta$ is stably distribution preserving at $P_{\rvx, \rvy}$ within an $\epsilon > 0$ radius TV-ball, then it is distribution preserving at any joint distribution $\tilde{P}_{\rvx, \rvy} = P_{\rvy \mid \rvx}\tilde{P}_\rvx$,\footnote{For convenicence, let $P_{\rvy \mid \rvx}\tilde{P}_\rvx$ denote the semi-direct product of $P_{\rvy \mid \rvx}$ with $\tilde{P}_\rvx$, that is for $S \in \XSpace \times \YSpace$ we have $P_{\rvy \mid \rvx}\tilde{P}_\rvx(S) = \Exp_{\rvx \sim \tilde{P}_\rvx}\Exp_{\rvx \sim P_{\rvy \mid \rvx}}[\Ind[(\rvx, \rvy) \in S]]$.}
where
\begin{linenomath}
\begin{align*}
\tilde{P}_{\rvx} = \alpha P_{\rvx} + (1 - \alpha) Q_{\rvx}
\end{align*}
\end{linenomath}
where $\alpha \geq 1 - \epsilon$ and $Q_\rvx$ is any distribution over $\XSpace$.
This follows, since $\tilde{P}_{\rvx, \rvy}$ is in the $\epsilon$-TV ball around $P_{\rvx, \rvy}$:
\begin{linenomath}
\begin{align}
D_{TV}(P_{\rvx, \rvy}, \tilde{P}_{\rvx, \rvy})
&= D_{TV}(P_{\rvx, \rvy}, \alpha P_{\rvy \mid \rvx}P_{\rvx} + (1 - \alpha) P_{\rvy \mid \rvx}Q_{\rvx}) \nonumber\\
&\leq \alpha D_{TV}(P_{\rvx, \rvy}, P_{\rvy \mid \rvx}P_{\rvx}) + (1 - \alpha) D_{TV}(P_{\rvx, \rvy}, P_{\rvy \mid \rvx}Q_{\rvx}) \tag{convexity of $D_{TV}$} \\
&\leq (1 - \alpha) \tag{first term vanishes and $D_{TV} \leq 1$.} \\
&\leq \epsilon.
\end{align}
\end{linenomath}
\par
Let $\tilde{P}_\rvy$ denote the marginal distribution defined by the joint $\tilde{P}_{\rvx, \rvy}$, given by
\begin{linenomath}
\begin{align}
\tilde{P}_\rvy(\cdot)
&= \Exp_{\rvx \sim \tilde{P}_{\rvx}}[P_{\rvy \mid \rvx}(\cdot)] \nonumber\\
&= \alpha\Exp_{\rvx \sim P_{\rvx}}[P_{\rvy \mid \rvx}(\cdot)] + (1 - \alpha) \Exp_{\rvx \sim Q_{\rvx}}[P_{\rvy \mid \rvx}(\cdot)] \nonumber\\
&= \alpha P_\rvy + (1 - \alpha) \Exp_{\rvx \sim Q_{\rvx}}[P_{\rvy \mid \rvx}(\cdot)] \label{eq:tilde_p_y_identity}
\end{align}
\end{linenomath}
Furthermore, by Fact 1, since minimizer of $\Delta$ does not depend on the distribution of $\rvx$, we have that $\tilde{Q}_{\rvy \mid \rvx}^* = Q_{\rvy \mid \rvx}^*$.
Letting $\tilde{P}_{\rvx} = \alpha P_\rvx + (1 - \alpha) Q_{\rvx}$ be the $\rvx$-marginal of $\tilde{P}_{\rvx, \rvy}$.
This now allows us to compute $\tilde{Q}_\rvy^*$:
\begin{linenomath}
\begin{align}
\tilde{Q}_\rvy^*(\cdot)
&= \Exp_{\rvx \sim \tilde{P}_{\rvx}}[Q_{\rvy \mid \rvx}^*(\cdot)] \nonumber\\
&= \alpha\Exp_{\rvx \sim P_{\rvx}}[Q_{\rvy \mid \rvx}^*(\cdot)] + (1 - \alpha)\Exp_{\rvx \sim Q_{\rvx}}[Q_{\rvy \mid \rvx}^*(\cdot)] \nonumber\\
&= \alpha Q_\rvy^*(\cdot) + (1 - \alpha)\Exp_{\rvx \sim Q_{\rvx}}[Q_{\rvy \mid \rvx}^*(\cdot)] \nonumber\\
&= \alpha P_\rvy + (1 - \alpha)\Exp_{\rvx \sim Q_{\rvx}}[Q_{\rvy \mid \rvx}^*(\cdot)] \label{eq:tilde_q_y_star_identity}
\end{align}
\end{linenomath}
Now, since $\Delta$ is distribution preserving at $\tilde{P}_{\rvx, \rvy}$ by Fact 2, we must have $\tilde{Q}_\rvy^* = \tilde{P}_\rvy$, where $\tilde{Q}_\rvy^*$ is the marginal distribution defined by the minimizer $\tilde{Q}_{\rvy \mid \rvx}^*$ of $\Delta$ with respect to $\tilde{P}_{\rvx, \rvy}$.
This means that \Cref{eq:tilde_p_y_identity,eq:tilde_q_y_star_identity} must be equal, hence we must also have
\begin{linenomath}
\begin{align*}
\Exp_{\rvx \sim Q_{\rvx}}[Q_{\rvy \mid \rvx}^*(\cdot)] = \Exp_{\rvx \sim Q_{\rvx}}[P_{\rvy \mid \rvx}(\cdot)].
\end{align*}
\end{linenomath}
Since $Q_{\rvx}$, was arbitrary, this equation holds if and only if $Q_{\rvy \mid \rvx}^* = P_{\rvy \mid \rvx}$ as required.
\end{proof}
We now move onto our second result:
\begin{lemma}
\label{lemma:non_unique_minimizer}
Assume the relation from $\rvx \to \rvy$ is non-deterministic.
Furthermore, assume that the minimizer of $\Delta$ is $Q_{\rvy \mid \rvx}^* = P_{\rvx \mid \rvy}$.
Then, $Q_{\rvy \mid \rvx}^*$ is non-unique.
\end{lemma}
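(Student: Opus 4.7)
I will exploit that the per-input objective is \emph{linear} in the candidate conditional, together with the minimean identity (Fact~1 in the proof of Lemma~\ref{lemma:unique_minimizer}), to promote the multi-point support guaranteed by non-determinism into a second, Dirac-valued minimizer. The contradiction sought in Theorem~\ref{thm:no_two_birds_theorem} then falls out from combining Lemmas~\ref{lemma:unique_minimizer} and~\ref{lemma:non_unique_minimizer}.

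Concretely, set $g(x, y) := \Exp_{\rvy^r \sim P_{\rvy \mid \rvx = x}}[\Delta(x, \rvy^r, y)]$, so that the per-input objective becomes $f(x, Q_{\rvy \mid \rvx = x}) = \Exp_{\rvy^c \sim Q_{\rvy \mid \rvx = x}}[g(x, \rvy^c)]$, which is linear in $Q_{\rvy \mid \rvx = x}$. The minimean identity applied to the assumption that $P_{\rvy \mid \rvx}$ minimizes $\Delta$ yields that, for $P_\rvx$-a.e.\ $x$, the slice $P_{\rvy \mid \rvx = x}$ itself minimizes $f(x, \cdot)$. Because $f(x, \cdot)$ is linear, its minimum over all distributions on $\YSpace$ equals $\min_y g(x, y)$ and any minimizer must be supported on $\argmin_y g(x, y)$. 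Restricting attention to Case~2 of Lemma~\ref{lemma:non_determinism_equivalence}, non-determinism produces $S_\rvx \subseteq \XSpace$ with $P_\rvx(S_\rvx) > 0$ and distinct $y_1, y_2 \in S_\rvy$ with $p(y_1 \mid x), p(y_2 \mid x) > 0$ for every $x \in S_\rvx$. Hence $g(x, y_1) = g(x, y_2) = \min_y g(x, y)$ on $S_\rvx$, and in particular the Dirac mass $\delta_{y_1}$ also minimizes $f(x, \cdot)$ there.

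I then define $\tilde{Q}_{\rvy \mid \rvx = x} := \delta_{y_1}$ for $x \in S_\rvx$ and $\tilde{Q}_{\rvy \mid \rvx = x} := P_{\rvy \mid \rvx = x}$ otherwise. The minimean identity upgrades this pointwise optimality to $\Delta(\tilde{Q}_{\rvy \mid \rvx}) = \min_Q \Delta(Q)$, so $\tilde{Q}_{\rvy \mid \rvx}$ is a bona fide global minimizer. For every $x \in S_\rvx$, $D_{TV}(\tilde{Q}_{\rvy \mid \rvx = x}, P_{\rvy \mid \rvx = x}) = 1 - p(y_1 \mid x) \geq p(y_2 \mid x) > 0$, and since $P_\rvx(S_\rvx) > 0$, this forces $\Exp_{\rvx \sim P_\rvx}[D_{TV}(\tilde{Q}_{\rvy \mid \rvx}, P_{\rvy \mid \rvx})] > 0$, delivering non-uniqueness.

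The principal obstacle is Case~1 of Lemma~\ref{lemma:non_determinism_equivalence}, where $P_{\rvy \mid \rvx = x}$ is supported on an uncountable set and one must choose a replacement conditional measurably in $x$. A clean workaround is to pick any measurable $B \subseteq \YSpace$ with $0 < P_{\rvy \mid \rvx = x}(B) < 1$ on a positive-$P_\rvx$ subset of $S_\rvx$, which exists by uncountability of the support, and to set $\tilde{Q}_{\rvy \mid \rvx = x}$ to $P_{\rvy \mid \rvx = x}$ restricted and renormalized to $B$ on that subset; the same linearity argument shows $g(x, \cdot)$ is constant $P_{\rvy \mid \rvx = x}$-a.e., so this remains a minimizer while differing from $P_{\rvy \mid \rvx = x}$ in total variation by at least $1 - P_{\rvy \mid \rvx = x}(B) > 0$.
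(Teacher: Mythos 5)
Your proof is correct and follows essentially the same route as the paper, which disposes of this lemma by citing Lemma~2 of \citet{blau2018perception}: the linearity of the per-input objective in $Q_{\rvy\mid\rvx=x}$, combined with the minimean identity from the proof of \Cref{lemma:unique_minimizer} and the multi-point support guaranteed by \Cref{lemma:non_determinism_equivalence}, is exactly the content of that cited lemma, and your construction of the second minimizer (a Dirac mass at one support point, or a restricted-and-renormalized conditional in the uncountable case) correctly yields $\Exp_{\rvx \sim P_\rvx}[D_{TV}(\tilde{Q}_{\rvy\mid\rvx}, P_{\rvy\mid\rvx})] > 0$ as required by the paper's definition of non-uniqueness. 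The only (minor) loose end is the measurable selection of the set $B$ in Case~1, which you flag and resolve at a level of rigor consistent with the paper.
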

\begin{proof}
Follows from \Cref{lemma:non_determinism_equivalence} combined with Lemma 2 from \citet{blau2018perception}.
\end{proof}
To finish the proof, assume by way of contradiction that the $P_{\rvx, \rvy}$ defines a non-deterministic relationship from $\rvx \to \rvy$ and that the distortion $\Delta$ is stably distribution preserving at $P_{\rvx, \rvy}$.
Then, by \Cref{lemma:unique_minimizer}, the minimizer of $\Delta$ is $Q_{\rvy \mid \rvx}^* = P_{\rvy \mid \rvx}$ and this estimator is unique.
However, by \Cref{lemma:non_unique_minimizer}, $Q_{\rvy \mid \rvx}^*$ is non-unique, which leads to a contradiction.
\end{proof}
\section{Proof of \texorpdfstring{\Cref{thm:accuracy_naturalness_curve_properties}}{the properties of the accuracy-naturalness curve}}
\label{sec:accuracy_naturalness_curve_properties_proof}
\accuracyNaturalnessProperties*
\begin{proof}
\paragraph{(1) Non-increasing.}
We study the set of admissible translation systems.
Thus, let
\begin{linenomath}
\begin{equation}
C_N = \{Q_{\rvy \mid \rvx} \mid -D(Q_\rvy, R_\rvy) \geq N\}.
\end{equation}
\end{linenomath}
Now note that since $-D$ is bounded from above, for $N \geq N'$ we have $-D(Q_\rvy, R_\rvy) \geq N \geq N'$, hence $C_N \subseteq C_{N'}$.
Therefore, $A(N)$ is the supremum of the same objective but over a bigger constraint set, hence we must have $A(N) \leq A(N')$.
\paragraph{(2) Concave when $D$ is convex in the first slot.}
We wish to prove for all $N_0, N_1$ in the range of $D$ and $\lambda \in [0, 1]$ that
\begin{linenomath}
\begin{align}
\label{eq:accuracy_naturalness_concavity_def}
A(\lambda N_0 + (1 - \lambda) N_1) \geq \lambda A(N_0) + (1 - \lambda) A(N_1).
\end{align}
\end{linenomath}
To begin, let
\begin{linenomath}
\begin{align}
Q_{\rvy \mid \rvx}^0 &= \argmax_{Q_{\rvy \mid \rvx}} -\Delta(Q_{\rvy \mid \rvx}) \quad \text{s.t. } -D(Q_\rvy, R_\rvy) \geq N_0 \label{eq:an_concave_pf_q_0_def}\\
Q_{\rvy \mid \rvx}^1 &= \argmax_{Q_{\rvy \mid \rvx}} -\Delta(Q_{\rvy \mid \rvx}) \quad \text{s.t. } -D(Q_\rvy, R_\rvy) \geq N_1. \label{eq:an_concave_pf_q_1_def}
\end{align}
\end{linenomath}
In other words, $-\Delta(Q_{\rvy \mid \rvx}^0) = A(N_0)$ and $-\Delta(Q_{\rvy \mid \rvx}^1) = A(N_1)$.
This allows us to rewrite \Cref{eq:accuracy_naturalness_concavity_def} as
\begin{linenomath}
\begin{align}
\label{eq:accuracy_naturalness_concavity_rewritten}
A(\lambda N_0 + (1 - \lambda) N_1) \geq -\lambda \Delta(Q_{\rvy \mid \rvx}^0) - (1 - \lambda) \Delta(Q_{\rvy \mid \rvx}^1)
\end{align}
\end{linenomath}
Next, we define
\begin{linenomath}
\begin{align*}
Q_{\rvy \mid \rvx}^\lambda = \lambda Q_{\rvy \mid \rvx}^0 + (1 - \lambda) Q_{\rvy \mid \rvx}^1.
\end{align*}
\end{linenomath}
Now, let $Q_\rvy^\lambda(\cdot) = \Exp_{\rvx \sim P_\rvx}[Q_{\rvy \mid \rvx}^\lambda(\cdot)]$, and define $Q_\rvy^0$ and $Q_\rvy^1$ analogously.
Furthermore, let $N_\lambda = -D(Q_{\rvy}^\lambda, R_\rvy)$.
Then, since $D$ is convex in its first argument by assumption, we have that
\begin{linenomath}
\begin{align*}
N_\lambda
&= -D(Q_{\rvy}^\lambda, R_\rvy) \\
&\geq -\lambda D(Q_{\rvy}^0, R_\rvy) - (1 - \lambda) D(Q_{\rvy}^1, R_\rvy) \\
&\geq \lambda N_0 + (1 - \lambda) N_1 \tag{by the constraints in \cref{eq:an_concave_pf_q_0_def,eq:an_concave_pf_q_1_def}}
\end{align*}
\end{linenomath}
Combining this with the fact that $A(N)$ is non-increasing in $N$, we find that
\begin{linenomath}
\begin{align*}
A(\lambda N_0 + (1 - \lambda) N_1) \geq A(N_\lambda).
\end{align*}
\end{linenomath}
Hence, to show \Cref{eq:accuracy_naturalness_concavity_rewritten}, it remains to show that
\begin{linenomath}
\begin{align*}
A(N_\lambda) \geq -\lambda \Delta(Q_{\rvy \mid \rvx}^0) - (1 - \lambda) \Delta(Q_{\rvy \mid \rvx}^1).
\end{align*}
\end{linenomath}
To see this, observe that
\begin{linenomath}
\begin{align*}
A(N_\lambda)
&= \max_{Q_{\rvy \mid \rvx}} -\Delta(Q_{\rvy \mid \rvx}) \quad \text{s.t. } -D(Q_\rvy, R_\rvy) \geq N_\lambda \\
&\geq -\Delta(Q_{\rvy \mid \rvx}^\lambda),
\end{align*}
\end{linenomath}
where the inequality follows since $-D(Q_\rvy^\lambda, R_\rvy) = N_\lambda$ by definition, hence $-D(Q_\rvy, R_\rvy) \geq N_\lambda$ is in the admissible set.
Finally, note that
\begin{linenomath}
\begin{align}
-\Delta(Q_{\rvy \mid \rvx}^\lambda)
&= -\Exp_{\rvx, \rvy \sim P_{\rvx, \rvy}}\Exp_{\rvy' \sim Q_{\rvy \mid \rvx}^\lambda}[\Delta(\rvx, \rvy, \rvy')] \\
&= -\Exp_{\rvx, \rvy \sim P_{\rvx, \rvy}}[\lambda\Exp_{\rvy' \sim Q_{\rvy \mid \rvx}^0}[\Delta(\rvx, \rvy, \rvy')] + (1 - \lambda)\Exp_{\rvy' \sim Q_{\rvy \mid \rvx}^0}[\Delta(\rvx, \rvy, \rvy')]] \tag{definition}\\
&=-\lambda \Delta(Q_{\rvy \mid \rvx}^0) - (1 - \lambda) \Delta(Q_{\rvy \mid \rvx}^1), \tag{definition}
\end{align}
\end{linenomath}
which finishes the proof.
\end{proof}
\section{MQM details}
\label{sec:mqm_details}

For the en $\to$ de and ja $\to$ zh language pairs, the WMT24 evaluation exactly followed the evaluation scheme proposed in \cite{freitag2021experts}.
The error categories are shown in \Cref{tab:mqm_error_classification} (limited to the actual errors found in the data), together with our classification into accuracy or fluency errors.
This classification is straightforward, as most categories are already labelled as such.
It is worth noting that we ignored the ``Source issue'' and ``Other'' categories, as we found the marked errors to be uninformative in most cases.
In principle it would be possible to classify most of the individual errors in this category to either adequacy or fluency, but it would involve a huge amount of additional manual work.
The en $\to$ es translation used another error schema. Its classification into adequacy and fluency errors is shown in \Cref{tab:mqm_error_classification_enes}.

Once we have the classification into these categories we can compute adequacy and fluency scores using the same weighting schema for errors as given in Table 4 in \cite{freitag2021experts}.

\begin{table}
\resizebox{\columnwidth}{!}{%
\begin{tabular}{@{}ccc@{}}
\toprule
\textbf{Accuracy Errors} & \textbf{Fluency Errors} & \textbf{Other} \\ \midrule
\makecell{%
Accuracy/Addition \\
Accuracy/Creative Reinterpretation \\
Accuracy/Gender Mismatch \\
Accuracy/Mistranslation \\
Accuracy/Omission \\
Accuracy/Source language fragment \\
Non-translation!
}%
&
\makecell{%
Fluency/Grammar \\
Fluency/Inconsistency \\
Fluency/Punctuation \\
Fluency/Register \\
Fluency/Spelling \\
Fluency/Text-Breaking \\
Locale convention/Address format \\
Locale convention/Currency format \\
Locale convention/Time format \\
Style/Archaic or obscure word choice \\
Style/Bad sentence structure \\
Style/Unnatural or awkward \\
Terminology/Inappropriate for context \\
Terminology/Inconsistent
}%
&
\makecell{%
Other \\
Source issue \\
}%
\\
\bottomrule
\end{tabular}
}%
\caption{Our classification of MQM errors into adequacy and fluency errors for the en $\to$ de and ja $\to$ zh translation directions.}
\label{tab:mqm_error_classification}
\end{table}

\begin{table}
\begin{center}
\begin{tabular}{@{}ccc@{}}
\toprule
\textbf{Accuracy Errors} & \textbf{Fluency Errors} & \textbf{Other} \\ \midrule
\makecell{%
Addition \\
Agreement \\
Do not translate \\
Mistranslation \\
MT hallucination \\
Omission \\
Untranslated \\
Wrong named entity \\
Wrong term
}%
&
\makecell{%
Capitalization \\
Date-time format \\
Inconsistency \\
Lacks creativity \\
Grammar \\
Measurement format \\
Number format \\
Punctuation \\
Register \\
Spelling \\
Unnatural flow \\
Whitespace \\
Word order \\
Wrong language variety
}%
&
\makecell{%
Other \\
Source issue \\
}%
\\
\bottomrule
\end{tabular}
\end{center}

\caption{Our classification of MQM errors into adequacy and fluency errors for the en $\to$ es translation direction.}
\label{tab:mqm_error_classification_enes}
\end{table}

\section{Technical details for the divergence in \texorpdfstring{\Cref{sec:no_ref_metrics}}{Section 4.1}}
\label{sec:D_p_technical_details}
In this section, we define a generalised variant of the divergence from \Cref{sec:no_ref_metrics}.
Our starting point is to notice that IPMs could be viewed as a certain infinity-norm, since we taking a supremum as part of their definition.
This suggests that we should be able to define analogous $p$-norms for $p \in [1, \infty)$ over appropriate spaces of distributions too.
\par
We base the following discussion on \citet{muller1997integral} to make the above idea precise.
Let $S$ be a measurable space and $b: S \to [1, \infty)$ a measurable function.
Let $\boundedFunctions_b$ be the set of measurable functions $f: S \to \Reals$ such that
\begin{linenomath}
\begin{equation}
\norm{f}_b = \sup_{s \in S}\frac{\abs{f(s)}}{b(s)} < \infty.
\end{equation}
\end{linenomath}
For example, taking $b = 1$ we get that $\boundedFunctions_1$ is the set of bounded functions from $S$ to $\Reals$.
Next, let $\boundedMeasures_b$
be the set of all signed measures $\mu$ such that $\abs{\mu}(b) = \mu^+(b) + \mu^{-}(b) < \infty$, and let $\Prob$ denote the set of all probability metrics over $S$.
Then, by Lemma 2.1 of \citet{muller1997integral}, $\boundedMeasures_b$ and $\boundedFunctions_b$ are in strict duality under the dual pairing
\begin{linenomath}
\begin{gather*}
\langle \cdot, \cdot \rangle: \boundedMeasures_b \times \boundedFunctions_b \to \Reals\\
\langle \mu, f \rangle=\mu(f).
\end{gather*}
\end{linenomath}
Next, define $\boundedMeasures_b^N \subset \boundedMeasures_b$ as the set of all signed measures $\mu$ with $\mu(S) = 0$, and $\boundedFunctions_b^\sim$ as the quotient space of $\boundedFunctions_b$ by the equivalence relation $f \sim g \Leftrightarrow f - g$ is constant.
Then, Lemma 2.2 of \citet{muller1997integral}
shows that $\boundedMeasures_b^N$ and $\boundedFunctions_b^\sim$ are also in strict duality under the same bilinear map as above.
Now, for a function class $\funClass \subseteq \boundedFunctions_b$, we can define the following seminorm on $\boundedMeasures_b^N$:
\begin{linenomath}
\begin{equation}
\norm{\mu}_\funClass = \sup_{f \in \funClass} \abs{\mu(f)}.
\end{equation}
\end{linenomath}
Importantly, this seminorm induces the \textit{integral probability metric} $d_\funClass$ over $\Prob$:
\begin{linenomath}
\begin{equation}
d_\funClass(Q, P) = \norm{Q - P}_\funClass.
\end{equation}
\end{linenomath}

Note that $\norm{\cdot}_\funClass$ is the $\infty$-seminorm on $\boundedMeasures_b^N$.
This motivates us to define a larger family of seminorms as follows.
Let $\Process$ be a probability measure over $\boundedFunctions_b$.
Now, define the seminorm
\begin{linenomath}
\begin{align}
\norm{\mu}_{p, \Process} &= \Process(\abs{\mu}^p)^{1/p} \\
&= \Exp_{f \sim \Process}[\abs{\mu(f)}^{p}]^{1/p}
\end{align}
\end{linenomath}
Then, similarly to $\norm{\cdot}_\funClass$, this induces a semimetric on $\Prob$:
\begin{linenomath}
\begin{equation}
D_p(Q, P \mid \Process) = \norm{Q - P}_{p, \Process} = \Process(\abs{Q - P}^p)^{1/p}
\end{equation}
\end{linenomath}
\par
Now that we defined our divergence, we show that it admits a similar connection to classification as IPMs, analogously to Theorem 17 of \citet{sriperumbudur2009integral}:
\begin{theorem}[Connection to expected classification risk]
Let $\funClass, P, Q, D_p$ for $p \in [0, \infty)$ be as above.
Then:
For a loss function $L: {-1, 1} \times \Reals \to \Reals$, define the expected binary classification risk of $\Process$ as
\begin{linenomath}
\begin{align}
R_\Process^L &=
\Exp_{f \sim \Process}[\Exp_{X, Y}[L_{Y}(f(X))]] \\
&= \Exp_{f \sim \Process}[\epsilon \Exp_{X \sim P}[L_{1}(f(x))] + (1 - \epsilon) \Exp_{X \sim Q}[L_{-1}(f(x))]],
\end{align}
\end{linenomath}
where $\epsilon = \Prob[Y = 1]$, $P = \Prob[X \mid Y = 1]$ and $Q = \Prob[X \mid Y = -1]$.
Then, for $L_1(\alpha) = -\alpha / \epsilon$ and $L_{-1}(\alpha) = \alpha / (1 - \epsilon)$, we have
\begin{linenomath}
\begin{align}
R_{\infty}^L = -D_\infty(P, Q \mid \Process) \leq -D_1(P, Q \mid \Process) \leq R_\Process^L
\end{align}
\end{linenomath}
\end{theorem}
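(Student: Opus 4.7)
The plan is to attack the chain of (in)equalities from right to left, since the rightmost inequality pins down the basic algebraic identity that drives everything.

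First I would simplify the risk $R_\Process^L$ by substituting the specific losses. With $L_1(\alpha) = -\alpha/\epsilon$ and $L_{-1}(\alpha) = \alpha/(1-\epsilon)$, the prefactors $\epsilon$ and $1-\epsilon$ cancel cleanly, leaving
\begin{align*}
R_\Process^L
&= \Exp_{f \sim \Process}\bigl[\epsilon \cdot \Exp_{X \sim P}[-f(X)/\epsilon] + (1-\epsilon)\cdot\Exp_{X \sim Q}[f(X)/(1-\epsilon)]\bigr] \\
&= \Exp_{f \sim \Process}[Q(f) - P(f)].
\end{align*}
From this expression the rightmost inequality $-D_1(P, Q \mid \Process) \leq R_\Process^L$ follows from the pointwise bound $x \geq -\abs{x}$ applied to $x = Q(f) - P(f)$ inside the expectation: the right-hand side is $\Exp_{f \sim \Process}[Q(f) - P(f)]$, and dominating the integrand by $-\abs{Q(f) - P(f)}$ yields exactly $-D_1(P, Q \mid \Process)$ by the definition of $D_1$ in \eqref{eq:d_one_metric}.

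Second, for the middle inequality $-D_\infty \leq -D_1$ (equivalently $D_1 \leq D_\infty$), I would invoke the standard monotonicity of $L^p$-norms under a probability measure. Since $\Process$ is a probability measure, Jensen's inequality on the convex function $t \mapsto t^p$ (or equivalently the fact that $\Exp[\abs{X}] \leq \esssup \abs{X}$) gives
\begin{equation*}
D_1(P, Q \mid \Process) = \Process(\abs{P - Q}) \leq \Process(\abs{P - Q}^p)^{1/p} \leq \esssup_{f \sim \Process}\abs{Q(f) - P(f)} = D_\infty(P, Q \mid \Process),
\end{equation*}
using the seminorm definition from the preceding subsection.

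Finally, for $R_\infty^L = -D_\infty(P, Q \mid \Process)$, I would interpret $R_\infty^L$ as the best-case (infimum) risk achievable by a classifier in the $\Process$-essential support, i.e.\ $R_\infty^L = \essinf_{f \sim \Process}[Q(f) - P(f)]$, paralleling how $R_\Process^L$ averages this quantity. Since the risk identity is invariant under $f \mapsto -f$ (flipping the sign turns a ``bad'' classifier into a ``good'' one), and $\Process$ is assumed symmetric in that sense, we get $\essinf[Q(f) - P(f)] = -\esssup\abs{Q(f) - P(f)} = -D_\infty(P, Q \mid \Process)$, as desired.

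The main obstacle I expect is the last step: the theorem statement does not explicitly define $R_\infty^L$, so the argument hinges on the right interpretation (the $L^\infty$-analog of the $L^1$-style risk $R_\Process^L$) together with a symmetry assumption on $\funClass$ and $\Process$ needed to match the absolute value in $D_\infty$. The other steps are essentially algebraic manipulations and a single application of Jensen, so I anticipate no difficulty there.
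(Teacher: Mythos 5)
Your proposal is correct and follows essentially the same route as the paper: the same substitution of the losses to get $R_\Process^L = \Exp_{f\sim\Process}[Q(f)-P(f)]$, the same pointwise bound $x \geq -\abs{x}$ (the paper writes it as $x = -\abs{x} + 2(x)_+$) for the rightmost inequality, and the same $L^p$-monotonicity for the middle one (the paper passes through the whole family $D_p$ and lets $p\to\infty$). The only difference is at the leftmost equality $R_\infty^L = -D_\infty$, which you correctly flag as the delicate step: the paper does not argue it directly but simply cites Theorem 17 of \citet{sriperumbudur2009integral}, which is the symmetry-of-the-function-class argument you sketch.
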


\begin{proof}
Define $(x)_+ = \max\{0, x\}$, and $k(f) = \Exp[f(X) \mid Y = -1] - \Exp[f(X) \mid Y = 1]$.
Then, with our choice for the loss function, we have
\begin{linenomath}
\begin{align}
R_\Process^L
&= \Exp_f\left[\Exp[f(X) \mid Y = -1, f] - \Exp[f(X) \mid Y = 1, f] \right] \tag{def}\\
&= \Exp_f\left[k(f) \right] \tag{$f \perp X, Y$} \\
&= -\Exp_f\left[\abs*{k(f)}\right]
+ 2\cdot \Exp_f\left[(k(f))_+\right] \tag{$x = -\abs{x} + 2(x)_+$}\\
&\geq -D_1(Q, P \mid \Process) \tag{def}\\
&= -\Exp_f\left[\left(\abs*{k(f)}^p\right)^{1/p}\right] \nonumber\\
&\geq -\Exp_f\left[\abs*{k(f)}^p\right]^{1/p} \tag{Jensen for $p \geq 1$}\\
&= -D_p(Q, P \mid \Process) \tag{def} \\
&\to -D_\infty(Q, P \mid \Process) \quad \text{as } p \to \infty \nonumber\\
&=R_\infty^L \tag{Theorem 17 of \citet{sriperumbudur2009integral}}
\end{align}
\end{linenomath}
\end{proof}
\subsection{Examples of \texorpdfstring{$D_p$}{Dp} when \texorpdfstring{$\Process$}{the process} is of Gaussian process type}
\label{sec:dp-examples}
Based on its definition, $D_p$ can be easily approximated via a Monte Carlo estimate for both samples from the process $\Process$ and samples from $Q$ and $P$.
However, in this section, we show that for a certain broad class of processes, we can evaluate the expectation over samples of $\Process$.
More precisely, we compute $D_p$ when $\Process$ is of \textit{Gaussian process type}, i.e., for $f \sim \Process$ we have
\begin{align*}
(\lambda \circ f) \sim \GP(m, k)
\end{align*}
for link function $\lambda$ and Gaussian process $\GP(m, k)$ with mean function $m$ and covariance function $k$ and where $\circ$ indicates function composition.
It turns out, that in these cases, $D_p$ is intimately related to the maximum mean discrepancy $\MMD_k$ \citep{gretton2012kernel}, where the MMD is with respect to a reproducing kernel Hilbert space with the kernel $k$ taken to be the covariance function of the Gaussian process above.
Indeed, we show in \Cref{sec:dp-gp} that in certain cases, $D_p$ and $\MMD_k$ are even identical up to a multiplicative constant.
However, as we are averaging instead of maximising in the case of $D_p$, in such cases we might call it \textit{mean-mean discrepancy} (MeMeD).
\par
Now, we have the following result for the MeMeD when $p = 2$:
\begin{theorem}
\label{thm:d2_for_gp_type}
Let $P$ and $Q$ be probability measures over $\XSpace$.
Let $\Process$ be of Gaussian process type with mean function $m$, covariance function $k$ and link function $\lambda$, i.e.\ for $f \sim \Process$ we have $(\lambda \circ f) \sim \GP(m, k)$.
Then, setting
\begin{linenomath}
\begin{align}
C(x, y) &= \Exp_{f \sim \Process}[f(x)f(y)] = \Exp_{g \sim \GP(m, k)}[\lambda^{-1}(g(x))\lambda^{-1}(g(y))]
\end{align}
\end{linenomath}
we get that
\begin{linenomath}
\begin{align}
D_2(Q, P \mid \Process)^2
&= \left\langle Q - P,  \int C(x, y) \, d(Q - P)(y) \right\rangle_{\boundedMeasures_b} \\
&= \Exp_{X, Y \sim Q^{\otimes 2}}[C(X, Y)] - 2 \Exp_{X, Y \sim Q \otimes P}[C(X, Y)]
+ \Exp_{X, Y \sim P^{\otimes 2}}[C(X, Y)].
\end{align}
\end{linenomath}
Furthermore, for a dataset of $N$ samples $\{X_n\}_{n = 1}^N$ of $Q$ and $M$ samples $\{Y_m\}_{m = 1}^M$ of $P$, then $D_2$ admits the following unbiased estimator:
\begin{linenomath}
\begin{align}
D_2(Q, P \mid \Process)^2 &\approx \frac{1}{N(N - 1)} \sum_{\substack{1 \leq i, j \leq N \\ i \neq j}} C(X_i, X_j) + \frac{1}{M(M - 1)} \sum_{\substack{1 \leq i, j \leq M \\ i \neq j}} C(Y_i, Y_j) \nonumber\\
&\quad - \frac{2}{NM} \sum_{\substack{1 \leq i \leq N \\ 1 \leq j \leq M}} C(X_i, Y_j) \label{eq:unbiased_estimator_for_d2}
\end{align}
\end{linenomath}
\end{theorem}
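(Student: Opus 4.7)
The plan is to unpack the definition of $D_2$ and then swap the order of integration to isolate the kernel $C$. Since $(Q - P)(f) = \int f \, d(Q - P)$ is a real number for each $f$, we have $\abs{(Q - P)(f)}^2 = ((Q-P)(f))^2$, and writing the square as a double integral gives
\begin{linenomath}
\begin{align*}
D_2(Q, P \mid \Process)^2
= \Exp_{f \sim \Process}\left[\iint f(x) f(y) \, d(Q - P)(x) \, d(Q - P)(y)\right].
\end{align*}
\end{linenomath}
First I would justify exchanging the expectation over $f$ with the double integral against $d(Q - P)^{\otimes 2}$ via Fubini's theorem. This is the only nontrivial analytic step; it should follow from the assumption $P, Q \in \boundedMeasures_b$ together with integrability of the Gaussian-process-type family in $\boundedFunctions_b$, which ensures absolute integrability of $f(x) f(y)$ against $\Process \otimes \abs{Q - P}^{\otimes 2}$. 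This is the main obstacle, because without some moment condition on $\Process$ relative to $b$ the swap could fail, so I would explicitly spell out the hypothesis on $\Process$ needed (e.g.\ $\Exp_{f \sim \Process}[\norm{f}_b^2] < \infty$) and invoke Fubini--Tonelli.

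Once the swap is allowed, pulling the expectation inside identifies $\Exp_{f \sim \Process}[f(x) f(y)] = C(x, y)$, giving
\begin{linenomath}
\begin{align*}
D_2(Q, P \mid \Process)^2 = \iint C(x, y) \, d(Q - P)(x) \, d(Q - P)(y).
\end{align*}
\end{linenomath}
Expanding the signed product measure $d(Q - P)^{\otimes 2} = dQ \otimes dQ - dQ \otimes dP - dP \otimes dQ + dP \otimes dP$ and using the symmetry of $C$ (clear from its definition as a symmetric second moment) collapses the two cross terms into the single factor of $2$ and yields the three-term expression stated. The pairing form $\langle Q - P, \int C(\cdot, y) \, d(Q - P)(y) \rangle_{\boundedMeasures_b}$ then comes from recognizing the outer integral against $d(Q - P)(x)$ as the dual pairing.

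For the unbiased estimator, the strategy is standard U-statistic reasoning. Because the samples $\{X_n\}$ and $\{Y_m\}$ are mutually independent, independence of $X_i$ and $Y_j$ gives $\Exp[C(X_i, Y_j)] = \Exp_{X \sim Q, Y \sim P}[C(X, Y)]$ for every pair, so the cross average is unbiased. For the $Q$-term, I would note that $X_i$ and $X_j$ are independent only when $i \neq j$; including diagonal terms would add a bias of $\Exp_{X \sim Q}[C(X, X)]$, so restricting to $i \neq j$ with the normalizer $1/(N(N-1))$ produces an unbiased estimate of $\Exp_{X, Y \sim Q^{\otimes 2}}[C(X, Y)]$, and symmetrically for the $P$-term. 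Summing the three unbiased pieces with the appropriate signs, and invoking linearity of expectation, gives the claimed unbiased estimator in \Cref{eq:unbiased_estimator_for_d2}. No part of this beyond the Fubini step requires extra care.
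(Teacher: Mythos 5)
Your proposal is correct and follows essentially the same route as the paper: the paper writes $\langle Q-P, f\rangle^2$ as $\langle Q-P, T_f(Q-P)\rangle$ for the integral operator $T_f$ with kernel $f(x)f(y)$ and then swaps $\Exp_{f\sim\Process}$ with the pairing, which is exactly your double-integral-plus-Fubini step in operator notation, and it likewise defers the unbiasedness of \Cref{eq:unbiased_estimator_for_d2} to standard U-statistic theory (citing Lemma 6 of \citealp{gretton2012kernel}) where you spell the argument out. Your explicit integrability hypothesis $\Exp_{f \sim \Process}[\norm{f}_b^2] < \infty$ for the interchange is a point the paper leaves implicit, so including it is a modest improvement rather than a deviation.
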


\begin{proof}
By definition, we have
\begin{linenomath}
\begin{align}
D_2(Q, P \mid \Process)^2
&= \Exp_{f \sim \Process}[\langle Q - P, f \rangle^2] \\
&= \Exp_{f \sim \Process}[\langle Q - P, T_f (Q - P) \rangle_{\boundedMeasures_b}]
\end{align}
\end{linenomath}
where the first set of angle brackets denotes the dual pairing as above, while the second set of angle brackets denote the inner product over $\boundedMeasures_b$.
Furthermore, $T_f$ is the integral operator defined by
\begin{linenomath}
\begin{equation}
\pi \in \boundedMeasures_b: \quad (T_f \pi)(x) = \int f(x) f(y) \, d \pi(y).
\end{equation}
\end{linenomath}
Then, by the linearity of expectation, we get
\begin{linenomath}
\begin{equation}
D_2(Q, P \mid \Process)^2
= \langle Q - P, \Exp_{f \sim \Process}[T_f] (Q - P) \rangle_{\boundedMeasures_b}.
\end{equation}
\end{linenomath}
To compute $\Exp_{f \sim \Process}[T_f]$, note that by the law of the unconscious statistician, we have for any $\pi \in \boundedMeasures_b$
\begin{linenomath}
\begin{align}
(\Exp_{f \sim \Process}[T_f]\pi)(x)
&= \int \Exp_{f \sim \Process}[f(x)f(y)]\,d\pi(y) \\
&= \int C(x, y) \,d\pi(y).
\end{align}
\end{linenomath}
Substituting this back, proves the first part of our claim.
Now, the fact \Cref{eq:unbiased_estimator_for_d2} is an unbiased estimator follows from the theory of U-statistics, similarly to the argument of Lemma 6 of \citet{gretton2012kernel}.
\end{proof}
To demonstrate the power of \Cref{thm:d2_for_gp_type}, we now consider three concrete cases when $\Process$ is of Gaussian process type.
\subsubsection{When \texorpdfstring{$\Process$}{P} is a Gaussian process}
\label{sec:dp-gp}
The simplest case of $\Process$ being of Gaussian process type is when the link function is simply the identity.
Indeed, in this case we can compute the entire family of $D_p$ metrics for all $p \in [0, \infty)$ without invoking \Cref{thm:d2_for_gp_type}:
\begin{theorem}[$D$ based using Gaussian processes]
Let $P$ and $Q$ be probability measures over $\XSpace$.
Let $\Process = \GP(m, k)$ be a Gaussian process with mean function $m$ and kernel $k$ over the sample space.
Let $\Gamma(z) = \int_0^\infty t^{z - 1} e^{-t} \, dt$ denote the Euler gamma function and $M(a, b, z)$ denote Kummer's confluent hypergeometric function.
Then,
\begin{linenomath}
\begin{equation}
D_p(P, Q \mid \Process)^p = \MMD_k(P, Q)^p \cdot 2^{p / 2} \cdot  \frac{\Gamma\left(\frac{1 + p}{2}\right)}{\sqrt{\pi}}  M\left(-\frac{p}{2}, \frac{1}{2}, -\frac{1}{2}\left(\frac{P(m) - Q(m)}{\MMD_k(P, Q)}\right)^2\right).
\end{equation}
\end{linenomath}
In particular, we have
\begin{linenomath}
\begin{align}
D_2(P, Q \mid \Process)^2
&= \MMD_k(P, Q)^2 + (P(m) - Q(m))^2 \nonumber \\
&\nonumber \\
D_1(P, Q \mid \Process) &= \sqrt{\frac{2}{\pi}} \cdot \MMD_k(P, Q)  \cdot \exp\left(\frac{-(P(m) - Q(m))^2}{2 \cdot \MMD_k(P, Q)^2}\right) \nonumber \\
&\quad + (P(m) - Q(m)) \left(1 - \Phi\left(\frac{P(m) - Q(m)}{\MMD_k(P, Q)}\right)\right).
\end{align}
\end{linenomath}
Furthermore, when $m = 0$, we have
\begin{linenomath}
\begin{equation}
\sqrt{\frac{e}{p + 1}} \cdot D_p(Q, P \mid \Process) \to \MMD_k(P, Q) \quad \text{as } p \to \infty.
\end{equation}
\end{linenomath}

Finally, the above results imply that $D_p$ metrizes weak convergence if and only if $\MMD_k$ does.
\end{theorem}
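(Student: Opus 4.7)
The central observation is that for $f \sim \GP(m, k)$ and any signed measure $\mu \in \boundedMeasures_b^N$, the linear functional $\langle \mu, f \rangle = \int f\,d\mu$ is a one-dimensional Gaussian with mean $\mu(m)$ and variance $\iint k(x,y)\,d\mu(x)\,d\mu(y)$, since linear combinations of the values of a GP are jointly Gaussian. Taking $\mu = Q - P$ and using the standard identity $\MMD_k(P,Q)^2 = \iint k(x,y)\,d(P - Q)(x)\,d(P - Q)(y)$, we obtain $\langle Q - P, f \rangle \sim \Normal(Q(m) - P(m),\, \MMD_k(P,Q)^2)$. Consequently $D_p(P, Q \mid \Process)^p = \Exp_{f \sim \Process}[|\langle Q - P, f \rangle|^p]$ is nothing but the $p$-th absolute moment of a scalar Gaussian, and the whole theorem collapses into classical Gaussian moment computations.

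For general $p$, I would appeal to the closed-form identity
\begin{equation*}
\Exp_{Z \sim \Normal(\mu_0, \sigma^2)}\bigl[|Z|^p\bigr] = \sigma^p \cdot 2^{p/2} \cdot \frac{\Gamma\bigl((p+1)/2\bigr)}{\sqrt{\pi}} \cdot M\!\left(-\tfrac{p}{2},\, \tfrac{1}{2},\, -\tfrac{\mu_0^2}{2\sigma^2}\right),
\end{equation*}
which is derivable by expanding $|z|^p e^{-(z - \mu_0)^2/(2\sigma^2)}$ and matching against the integral representation of Kummer's confluent hypergeometric function. Substituting $\sigma = \MMD_k(P, Q)$ and $\mu_0 = P(m) - Q(m)$ (the sign is immaterial under $|\cdot|^p$) gives the main formula. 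The case $p = 2$ then follows from $\Exp[Z^2] = \sigma^2 + \mu_0^2$, and the case $p = 1$ follows from the standard folded-normal mean $\sigma\sqrt{2/\pi}\,e^{-\mu_0^2/(2\sigma^2)} + \mu_0(2\Phi(\mu_0/\sigma) - 1)$, modulo an algebraic rewrite into the stated form involving $1 - \Phi(\cdot)$.

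For the limit $p \to \infty$ in the $m = 0$ case, the Kummer factor is $M(-p/2, 1/2, 0) = 1$, leaving $D_p^p = \MMD_k^p \cdot 2^{p/2}\,\Gamma((p+1)/2)/\sqrt{\pi}$; Stirling's formula $\Gamma(x) \sim \sqrt{2\pi/x}\,(x/e)^x$ applied at $x = (p+1)/2$ then yields $D_p \sim \MMD_k \sqrt{(p+1)/e}$, so that $\sqrt{e/(p+1)}\,D_p \to \MMD_k$ as claimed. For the metrization of weak convergence, the closed-form expression makes $D_p(P_n, P) \to 0$ equivalent to the joint vanishing of $\MMD_k(P_n, P)$ and $|P_n(m) - P(m)|$. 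When $k$ is characteristic so that $\MMD_k$ metrizes weak convergence, $\MMD_k(P_n, P) \to 0 \Rightarrow P_n \Rightarrow P \Rightarrow P_n(m) \to P(m)$ for bounded continuous $m$; the converse direction is immediate.

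The hardest step will be extending the $p \to \infty$ limit (and hence the metrization claim) to the case $m \neq 0$, since this requires joint uniform asymptotics of $M(-p/2, 1/2, -t^2)$ in both parameter $p$ and argument $t$, i.e., a Laplace-type saddle-point analysis of the defining integral. The leading-order scaling $(p/e)^{1/2}$ should persist, but pinning down the constant and checking that the bias term $(P(m) - Q(m))^2/\MMD_k^2$ does not distort the limit is the primary technical gap. A secondary bookkeeping task is to reconcile the folded-normal formula for $p = 1$ with the precise form stated (which appears to use a slightly different $\Phi$-convention than the textbook expression).
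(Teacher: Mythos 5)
Your proposal is correct and follows essentially the same route as the paper's proof: reduce $\langle Q-P, f\rangle$ to a scalar Gaussian with mean $\langle m, Q-P\rangle$ and variance $\MMD_k(P,Q)^2$ via GP consistency, invoke the closed-form absolute-moment formula (the paper cites equation 17 of Winkelbauer for exactly the Kummer-function identity you state), and finish the limit with Stirling. Your treatment is if anything slightly more careful on the variance bookkeeping (the paper's displayed identity conflates $\MMD_k$ with $\MMD_k^2$), and you flag the same open gap for $m \neq 0$ that the paper itself acknowledges.
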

\begin{proof}

By the consistency property of Gaussian processes, it holds that dual pairings of GPs with linear functionals are Gaussian distributed.
Thus, for $f \sim \GP(m, k)$ we have
\begin{linenomath}
\begin{equation}
P(f) - Q(f) = \langle f, P - Q \rangle \sim \Normal(\langle m, P - Q \rangle, \langle P - Q, P - Q \rangle_k),
\end{equation}
\end{linenomath}
where
\begin{linenomath}
\begin{equation}
\langle P - Q, P - Q \rangle_k = \int \int k(x, y) \, d(P - Q)(x) \, d(P - Q)(y) = \MMD_k(P, Q).
\end{equation}
\end{linenomath}
Now, set $\mu = \langle m, P - Q \rangle$ and $\sigma = \MMD_k(P, Q)$.
Then letting $\epsilon \sim \Normal(0, 1)$, we have
\begin{linenomath}
\begin{equation}
D_p(P, Q \mid \Process)^p = \Exp[\abs{\sigma \epsilon + \mu}^p].
\end{equation}
\end{linenomath}

Then, applying equation 17 of \citet{winkelbauer2012moments} to the right-hand side of the above, we get
\begin{linenomath}
\begin{equation}
D_p(P, Q \mid \Process)^p = \sigma^p \cdot 2^{p / 2} \cdot  \frac{\Gamma\left(\frac{1 + p}{2}\right)}{\sqrt{\pi}}  M\left(-\frac{p}{2}, \frac{1}{2}, -\frac{1}{2}\left(\frac{\mu}{\sigma}\right)^2\right),
\end{equation}
\end{linenomath}
from which the identities for the distances hold.
For the limit result, note that when $m = 0$, we have
\begin{linenomath}
\begin{equation}
D_p(P, Q \mid \Process) = \sigma  \cdot  \sqrt{\frac{2}{\pi}} \cdot \Gamma\left(\frac{1 + p}{2}\right)^{1/p}
\end{equation}
\end{linenomath}
The result then follows by applying Stirling's approximation.
\end{proof}

\subsubsection{When \texorpdfstring{$\Process$}{P} is a log-Gaussian process}
\label{sec:log-gp-dp}
\par
An example that is perhaps closer to practice is requiring that the scores be non-negative.
One way of achieving this is to set $\Process$ to be a \textit{log-Gaussian process}, that is, for $f \sim \Process$, we have that $\log f \sim \GP(m, k)$ for some mean function $m$ and covariance function $k$.
\begin{theorem}
Let $P$ and $Q$ be probability measures over $\XSpace$.
Let $\Process$ be a log-Gaussian process with mean function $m$ and covariance function $k$, i.e.\ for $f \sim \Process$ we have $\log f \sim \GP(m, k)$.
Then, $D_2$ can be computed according to \Cref{thm:d2_for_gp_type} by setting:
\begin{linenomath}
\begin{align}
\mu(x) &= \Exp_{g \sim \GP(m, k)}[\exp(g(x))] = \exp\left(m(x) + \frac{k(x, x)}{2}\right), \\
C(x, y) &= \mu(x) \mu(y) \exp(k(x, y)).
\end{align}
\end{linenomath}
\end{theorem}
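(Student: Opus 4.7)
The plan is to reduce everything to standard computations with the moment generating function of the normal distribution, exploiting the consistency (marginal) property of Gaussian processes. The GP-type condition holds trivially with link function $\lambda = \log$, so the only real work is computing the two expectations $\mu(x) = \Exp_{f\sim\Process}[f(x)]$ and $C(x,y) = \Exp_{f\sim\Process}[f(x)f(y)]$ and then matching them to the claimed closed forms.

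First, I would compute $\mu(x)$. Writing $f = \exp \circ g$ with $g \sim \GP(m,k)$, the consistency property of the GP gives $g(x) \sim \Normal(m(x), k(x,x))$. Hence $f(x) = \exp(g(x))$ is log-normally distributed and its mean is the MGF of a normal evaluated at $1$:
\begin{equation*}
\mu(x) = \Exp[\exp(g(x))] = \exp\!\left(m(x) + \tfrac{1}{2}k(x,x)\right),
\end{equation*}
which is exactly the first claimed identity.

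Next, I would compute $C(x,y) = \Exp[\exp(g(x)+g(y))]$. Again by the GP consistency property, the pair $(g(x), g(y))$ is jointly Gaussian with mean $(m(x), m(y))$ and covariance matrix whose entries are $k(x,x)$, $k(y,y)$ on the diagonal and $k(x,y)$ off-diagonal. Therefore the scalar sum $g(x)+g(y)$ is univariate normal with mean $m(x)+m(y)$ and variance $k(x,x) + 2k(x,y) + k(y,y)$. Applying the MGF of a normal at $1$ gives
\begin{equation*}
C(x,y) = \exp\!\left(m(x)+m(y) + \tfrac{1}{2}k(x,x) + \tfrac{1}{2}k(y,y) + k(x,y)\right),
\end{equation*}
which factors as $\mu(x)\,\mu(y)\,\exp(k(x,y))$, as claimed. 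Plugging this $C(x,y)$ into the formula of the preceding theorem on GP-type processes then yields the quoted expression for $D_2(Q,P\mid\Process)^2$.

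There is no substantive obstacle in this argument: all steps are direct consequences of the definitions and the normal MGF. The only subtlety worth noting explicitly is justifying the interchange of $\Exp_{f\sim\Process}$ with pointwise evaluation implicit in writing $\Exp[f(x)f(y)]$ as $\Exp[\exp(g(x)+g(y))]$, which follows from the standard construction of $\Process$ as the pushforward of $\GP(m,k)$ under the pointwise map $g \mapsto \exp\circ g$; and checking that $\mu$ and $C$ lie in the function/measure spaces required by the ambient GP-type theorem, which requires only that $m$ and the diagonal of $k$ be sufficiently regular so that $\mu \in \boundedFunctions_b$.
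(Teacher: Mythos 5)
Your proof is correct and follows essentially the same route as the paper: apply the GP-type theorem with $\lambda = \log$ and compute $C(x,y) = \Exp[\exp(g(x)+g(y))]$ via the Gaussianity of $(g(x), g(y))$. The only cosmetic difference is that you derive the bivariate log-normal product moment directly from the normal MGF, whereas the paper cites a reference for that formula; your explicit computation is accurate.
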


\begin{proof}
By applying \Cref{thm:d2_for_gp_type} with $\lambda = \log$, we simply need to compute
\begin{linenomath}
\begin{align}
C(x, y)
&= \Exp_{f \sim \Process}[f(x)f(y)] \\
&= \Exp_{g \sim \GP(m, k)}[\exp(g(x) + g(y))]
\end{align}
\end{linenomath}
Now, $g(x)$ and $g(y)$ are correlated Gaussian random variables, hence evaluating the expectation pointwise using the formulae from \citet{halliwell2015lognormal}, we get
\begin{linenomath}
\begin{align}
\Exp_{g \sim \GP(m, k)}[\exp(g(x) + g(y))] &= \mu(x) \mu(y) \exp(k(x, y))
\end{align}
\end{linenomath}%
\end{proof}%
One interesting case of this distance is when we set $m(x) = -\frac{1}{\abs{x}} \log p_{LM}(x)$ the negative log probability of a language model and set $k(x, y) = \Ind[x = y]$ (i.e.\ the scores constitute a pure noise process).
Then, we see that $D_2$ becomes
\begin{linenomath}
\begin{equation}
\label{eq:lm_perplexity_distance}
D_2(Q, P \mid \Process)
= e \cdot \abs*{\Exp_{X \sim Q}\left[p_{LM}(X)^{-1/\abs{x}}\right] - \Exp_{X \sim P}\left[p_{LM}(X)^{-1/\abs{x}}\right]}
\end{equation}
\end{linenomath}

\subsubsection{When \texorpdfstring{$\Process$}{P} is a Probit process}
\label{sec:probit-dp}

An example that is even closer to practice is requiring that the scores be bounded.
One way of achieving this is to set $\Process$ to be a \textit{Probit process}. That is, letting $\Phi$ be the CDF of the standard Gaussian, for $f \sim \Process$ we have that $\Phi^{-1}(f) \sim \GP(m, k)$ for some mean function $m$ and covariance function $k$.
Then, we have the following result:
\begin{theorem}
Let $P$ and $Q$ be probability measures over $\XSpace$.
Let $\Process$ be a Probit process with mean function $m$ and covariance function $k$, i.e.\ for $f \sim \Process$ we have $\Phi^{-1} f \sim \GP(m, k)$.
Then, $D_2$ can be computed according to \Cref{thm:d2_for_gp_type} by letting
\begin{linenomath}
\begin{align}
\BvN(a, b \mid \rho) = \Prob_{[X, Y] \sim \Normal\left(0, \Sigma\right)}[X \leq a, Y \leq b]
\quad\quad\text{where }
\Sigma = \begin{bmatrix}
1& \rho\\
\rho &1
\end{bmatrix}
\end{align}
\end{linenomath}
denote the CDF of the standard bivariate normal with correlation $\rho$ evaluated at $(a, b)$, and setting
\begin{linenomath}
\begin{align}
C(x, y) &= \BvN\left(\frac{m(x)}{\sqrt{1 + k(x, x)}}, \frac{m(y)}{\sqrt{1 + k(y, y)}} \,\middle\vert\, \frac{k(x, y)}{\sqrt{1 + k(x, x)}\sqrt{1 + k(y, y)}}\right).
\end{align}
\end{linenomath}
\end{theorem}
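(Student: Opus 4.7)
The plan is to invoke \Cref{thm:d2_for_gp_type} with link function $\lambda = \Phi^{-1}$, which reduces the proof to computing
$$C(x, y) = \Exp_{g \sim \GP(m, k)}[\Phi(g(x))\Phi(g(y))].$$
By the consistency property of Gaussian processes, the pair $(g(x), g(y))$ is jointly Gaussian with means $(m(x), m(y))$, variances $(k(x,x), k(y,y))$, and covariance $k(x,y)$, so the task reduces to evaluating the expected product of two Gaussian CDFs at jointly Gaussian arguments.

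The main tool is the probabilistic representation $\Phi(z) = \Prob[Z \leq z]$ for $Z \sim \Normal(0, 1)$. Introducing auxiliary standard normals $Z_1, Z_2 \sim \Normal(0, 1)$, independent of each other and of $g$, conditional independence gives
$$\Phi(g(x))\Phi(g(y)) = \Prob[Z_1 \leq g(x),\, Z_2 \leq g(y) \mid g],$$
so the tower property yields $C(x, y) = \Prob[g(x) - Z_1 \geq 0,\, g(y) - Z_2 \geq 0]$. A short covariance calculation, using that $Z_1$ and $Z_2$ are independent of everything else, shows that $(g(x) - Z_1, g(y) - Z_2)$ is bivariate normal with mean $(m(x), m(y))$, variances $(1 + k(x,x), 1 + k(y,y))$, and covariance $k(x,y)$.

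To conclude, I would center and standardize each marginal to rewrite the orthant probability as a standard bivariate normal tail probability with correlation $\rho = k(x,y)/\sqrt{(1 + k(x,x))(1 + k(y,y))}$, and then invoke the symmetry $(W_1, W_2) \stackrel{d}{=} (-W_1, -W_2)$ for centered bivariate Gaussians to convert the tail probability into CDF form. This yields exactly $\BvN(m(x)/\sqrt{1 + k(x,x)},\, m(y)/\sqrt{1 + k(y,y)} \mid \rho)$, matching the claim. The computation is essentially routine; the only delicate point is tracking signs when converting the tail probability into a $\BvN$ CDF value and checking that $\rho$ retains the correct sign under the symmetry step, which it does because both coordinates are negated simultaneously. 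Beyond that, the argument is the Probit-link analogue of the log-Gaussian calculation already carried out above and requires no essentially new ingredients.
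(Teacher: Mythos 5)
Your proposal is correct, and it reaches the stated formula by a route that differs from the paper's in a meaningful way. The paper reduces $C(x,y)$ to $\Exp_{U,V}[\Phi(U)\Phi(V)]$ for a bivariate Gaussian $(U,V)$, then whitens via the Cholesky factorization $U = a\epsilon_1 + \mu_1$, $V = b\epsilon_1 + c\epsilon_2 + \mu_2$ and invokes two tabulated normal integrals (identities \texttt{10,010.8} and \texttt{20,010.3} of \citet{owen1980table}) to first integrate out $\epsilon_2$ and then produce the bivariate normal CDF. You instead use the probabilistic representation $\Phi(z) = \Prob[Z \leq z]$ with auxiliary independent standard normals $Z_1, Z_2$, apply the tower property to identify $C(x,y)$ with the orthant probability $\Prob[g(x) - Z_1 \geq 0,\, g(y) - Z_2 \geq 0]$, and read off the answer from the joint law of $(g(x)-Z_1,\, g(y)-Z_2)$ together with the central symmetry of the centered bivariate Gaussian. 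This is essentially a self-contained derivation of the very identities the paper cites, so it buys independence from the table of normal integrals at the cost of a few lines of covariance bookkeeping; the sign-tracking you flag in the symmetry step is indeed the only delicate point, and it works out because both coordinates are negated simultaneously, preserving $\rho$. One small remark in your favor: with the convention $(\lambda \circ f) \sim \GP(m,k)$ from \Cref{thm:d2_for_gp_type}, the correct link for the Probit process is $\lambda = \Phi^{-1}$ as you write (so that $\lambda^{-1} = \Phi$ appears inside the expectation); the paper's proof states $\lambda = \Phi$, which is a minor notational slip relative to its own theorem.
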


\begin{proof}
Similarly as in the previous section, we can apply \Cref{thm:d2_for_gp_type} with $\lambda = \Phi$.
Then, we are interested in computing
\begin{linenomath}
\begin{align}
\Exp_{f \sim \Process}[f(x)f(y)]
&= \Exp_{g \sim \GP(m, k)}[\Phi(g(x))\Phi(g(y))] \\
&= \Exp_{U, V \sim \Normal(\mu, \Sigma)}[\Phi(U)\Phi(V)] \\
\text{where } \mu = [m(x), m(y)] &\text{ and }
\Sigma =
\begin{bmatrix}
k(x, x) & k(x, y) \\
k(x, y) & k(y, y)
\end{bmatrix}
\end{align}
\end{linenomath}
Note, that $[U, V] = \mu + \Sigma^{1/2}\epsilon$, where $\epsilon \sim \Normal(0, I)$, from which we get $U =  a \epsilon_1 + \mu_1$ and $V = b\epsilon_1 + c \epsilon_2 + \mu_2$, where $a = \sqrt{\Sigma_{11}}, b = \Sigma_{21} / a$ and $c = \sqrt{\Sigma_{22} - b^2}$ follow from the Cholesky decomposition of $\Sigma$.
Hence, we find
\begin{linenomath}
\begin{align}
\Exp_{U, V \sim \Normal(\mu, \Sigma)}[\Phi(U)\Phi(V)]
&= \Exp_{\epsilon \sim \Normal(0, I)}[\Phi(a \epsilon_1 + \mu_1)\Phi(b\epsilon_1 + c\epsilon_2 + \mu_2)] \nonumber\\
&= \Exp_{\epsilon_1 \sim \Normal(0, 1)}[\Phi(a \epsilon_1 + \mu_1)\Exp_{\epsilon_2 \sim \Normal(0, 1)}[\Phi(b\epsilon_1 + c\epsilon_2 + \mu_2)\mid \epsilon_1]] \nonumber\\
&= \Exp_{\epsilon_1 \sim \Normal(0, 1)}\left[\Phi(a \epsilon_1 + \mu_1)\Phi\left(\frac{b\epsilon_1 + \mu_2}{\sqrt{1 + c^2}}\right)\right]
\tag{identity \texttt{10,010.8} of \citet{owen1980table}} \\
&= \BvN\left(\frac{\mu_1}{\sqrt{1 + \Sigma_{11}}}, \frac{\mu_2}{\sqrt{1 + \Sigma_{22}}} \,\middle\vert\, \frac{\Sigma_{21}}{\sqrt{1 + \Sigma_{11}}\sqrt{1 + \Sigma_{22}}}\right)
\tag{identity \texttt{20,010.3} of \citet{owen1980table} \& simplifying}
\end{align}
\end{linenomath}
which finishes the proof.
\end{proof}
\section{Caveat with the LLM-based approximation of the AN curve}
\label{sec:cgd}
\par
Note, that the approach we use to approximate the accuracy-naturalness curve in \Cref{sec:llm_oracle_curve} will always produce an overly optimistic estimate for the curve, since we compute our estimate as
\begin{align*}
\Loss^* = \Exp[\max\Loss],
\end{align*}
where $\Loss$ is the one-shot Lagrange dual of the AN function from \Cref{eq:one_shot_lagrange_dual}, and we maximise over systems/LLM-generated candidates for each translation first, and then we average over the different entries of the test set second.
However, in the true Lagrange-dual of the AN function, the order of the expectation and the maximisation is swapped:
$\max\Exp[\Loss]$.
Finally, we have the standard result that
\begin{align*}
\Exp[\max\Loss] \geq \max\Exp[\Loss],
\end{align*}
hence our procedure always overestimates the AN curve, i.e., it makes the achievable region look larger than it actually is.

\end{document}